\def\rlsvi{\textsc{RLSVI} }
\def\pk{{k}}
\def\rtsa{r_t(s,a)}
\def\Regret{\textsc{Regret}}
\def\thetahat{\widehat \theta}
\def\vo{\overline V}
\def\H{\mathcal H}
\def\G{\mathcal G}
\def\Ok{\mathcal O_k}
\def\thetabar{\overline\theta}
\def\stk{s_{tk}}
\def\sok{s_{1k}}
\def\Vbar{\overline V}
\newcommand{\qo}[1]{\overline Q_{#1}}
\newcommand{\deltacorr}[3]{(\Delta_{#1}^R(#2,#3) +{\Delta_{#1}^P(#2,#3)}^\top\vo_{#1+1,k}) }
\newcommand{\deltacorrsa}[1]{\deltacorr{t}{s}{a}}
\def\thetabar {\overline \theta}
\def\phitsat{\phi_t(s,a)^\top}
\newcommand{\Smallcondition}[2]{\|#2 \|_{\Sigma^{-1}_{#1}} \leq \alpha_L }
\newcommand{\NonSmallcondition}[2]{\|#2 \|_{\Sigma^{-1}_{#1}} > \alpha_L }
\newcommand{\Condition}[2]{\|#2\|_{\Sigma^{-1}_{#1}} }
\newcommand{\Mediumcondition}[2]{\alpha_L < \|#2 \|_{\Sigma^{-1}_{#1}} < \alpha_U }
\newcommand{\Largecondition}[2]{\|#2 \|_{\Sigma^{-1}_{#1}} \geq \alpha_U }
\newcommand{\sclip}[1]{\textsc{Clip}_{[-3(H-t),3(H-t)]}}
\newcommand{\psmall}[1]{\1\{x_t \text{is small}\}}
\newcommand{\plarge}[1]{\1\{x_t \text{is large}\}}
\newtheorem{lem}{Lemma}[section]
\let\Contentsline\contentsline 
\renewcommand\contentsline[3]{\Contentsline{#1}{#2}{}}
\renewcommand{\@dotsep}{10000} 
\runningtitle{Frequentist Regret Bounds for Randomized Least-Squares Value Iteration}
\runningauthor{Zanette, Brandfonbrener, Brunskill, Pirotta, Lazaric}
\begin{document}
\twocolumn[

\aistatstitle{Frequentist Regret Bounds for \\  Randomized Least-Squares Value Iteration}

\aistatsauthor{Andrea Zanette* \And David Brandfonbrener*}

\aistatsaddress{Stanford University \And  New York University }

\aistatsauthor{Emma Brunskill \And Matteo Pirotta \And Alessandro Lazaric}

\aistatsaddress{Stanford University \And Facebook AI Research \And Facebook AI Research} ]

\let\thefootnote\relax\footnotetext{*Equal Contribution}
%%%%%%%%%%%%%%%%%%%%%%%

\begin{abstract}
We consider the exploration-exploitation dilemma in finite-horizon reinforcement learning (RL). When the state space is large or continuous, traditional tabular approaches 
are unfeasible and some form of function approximation is mandatory. In this paper, we introduce an optimistically-initialized variant of the popular randomized least-squares value iteration (RLSVI), a model-free algorithm where exploration is induced by perturbing the least-squares approximation of the action-value function. Under the assumption that the Markov decision process has low-rank transition dynamics, we prove that the frequentist regret of RLSVI is upper-bounded by $\widetilde O(d^2 H^2 \sqrt{T})$ where $ d $ is the feature dimension, $ H $ is the horizon, and $ T $ is the total number of steps. To the best of our knowledge, this is the first frequentist regret analysis for randomized exploration with function approximation.

\end{abstract}

\section{Introduction}
A key challenge in reinforcement learning (RL) is how to balance exploration and exploitation in order to efficiently learn to make good sequences of decisions in a way that is both computationally tractable and statistically efficient.  
In the tabular case, the exploration-exploitation problem is well-understood for a number of settings (e.g., finite-horizon, average reward, infinite horizon with discount), exploration objectives (e.g., regret minimization and probably approximately correct), and for different algorithmic approaches, where optimism-under-uncertainty~\citep{Jaksch10,Fruit18} and Thompson sampling (TS)~\citep{osband2016deep,russo2019worst} are the most popular principles. For instance, in the finite-horizon setting, \citet{Azar17} and \citet{zanette2019tighter} recently derived minimax optimal and structure adaptive regret bounds for optimistic exploration algorithms.  TS-based algorithms have mainly been analyzed in tabular MDPs in terms of Bayesian regret~\citep{osband2016deep,OV17,OuyangGN017}, which assumes that the MDP is sampled from a known prior distribution. These bounds do not hold against a fixed MDP and algorithms with small Bayesian regret may still suffer high regret in some hard-to-learn MDPs within the chosen prior. In the tabular setting, frequentist (or worst-case) regret analysis has been developed for TS-based algorithms both in the average reward~\citep{GopalanM15,Agrawal2017} and finite-horizon case~\citep{russo2019worst}. Despite the fact that TS-based approaches have slightly worse regret bounds compared to optimism-based algorithms, their empirical performance is often superior~\citep{Chapelle2011empiricalTS,OV17}. 

Unfortunately, the performance of tabular exploration methods rapidly degrades with the number of states and actions, thus making them infeasible in large or continuous MDPs. So, one of the most important challenges to improve sample efficiency in large-scale RL is how to combine exploration mechanisms with generalization methods to obtain algorithms with provable regret guarantees. The simplest approach to deal with continuous state is discretization.
It has been used in \citet{ortner2012continuous,Lakshmanan2015improvedcont} to derive $\widetilde{O}(T^{3/4})$ and $\widetilde{O}(T^{2/3})$ frequentistic regret bounds for average reward MDPs. Recent work on contextual MDPs~\citep{jiang17contextual,dann2018oracle} yielded promising sample efficiency guarantees, but such algorithms are computationally intractable, and their bounds are not tight in the tabular settings. 

One of the most simple and popular forms of function approximation is to use a linear representation for the action-value functions. When the transition model also has low-rank structure, very recent work has shown that a variant of $Q$-learning can achieve polynomial sample complexity as a function of the state space dimension when given access to a generative model~\citep{yang2019sample}. Nonetheless, the generative model assumption removes most of the exploration challenge, as the state space can be arbitrarily sampled. Concurrently to our work, optimism-based exploration has been successfully integrated with linear function approximation both in model-based and model-free algorithms~\citep{yang2019reinforcement,jin2019provably}. In MDPs with low-rank dynamics, these algorithms are proved to have regret bounds scaling with the dimensionality $d$ of the linear space (i.e., the number of features) instead of the number of states.

On the algorithmic side, TS-based exploration can be easily integrated with linear function approximation as suggested in the Randomized Least-Squares Value Iteration (RLSVI) algorithm~\citep{osband2016generalization}. Despite promising empirical results, RLSVI has been analyzed only in the tabular case (i.e., when the features are indicators for each state) and for Bayesian regret. While RLSVI is a model-free algorithm, recent work~\citep{russo2019worst} leverages an equivalence between model-free and model-based algorithms in the tabular case to derive frequentist regret bounds. The analysis carefully chooses the variance of the perturbations applied to the estimated solution to ensure that the value estimates are optimistic with constant probability. 

In this paper we provide the first frequentist regret analysis for a variant of RLSVI when linear function approximation is used in the finite-horizon setting. Similar to optimistic PSRL for the tabular setting~\citep{Agrawal2017}, we modify RLSVI to ensure that the perturbed estimates used in the value iteration process are optimistic with constant probability. Following the results in the linear bandit literature~\citep{abeille2017linear}, we show that the perturbation applied to the the least-squares estimates should be larger than their estimation error. However, in contrast to bandit, perturbed estimates are propagated back through iterations and we need to carefully adjust the perturbation scheme so that the probability of being optimistic does not decay too fast with the horizon and, at the same time, we can control how the perturbations accumulate over iterations. Under the assumption that the system dynamics are low-rank, we show that the frequentist regret of our algorithm is $\widetilde O(H^2 d^2 \sqrt{T} + H^5 d^4  + \epsilon dH (1 + \epsilon dH^2)T)$ where $\epsilon$ is the misspecification level, $H$ is the fixed horizon, $d$ is the number of features, and $T$ is the number of samples. Similar to linear bandits, this is worse by a factor of $\sqrt{Hd}$ (i.e., the square root of the dimension of the estimated parameters) than the optimistic algorithm of \citet{jin2019provably}. Whether this gap can be closed is an open question both in bandits and RL.

\section{Preliminaries}
\label{rlsvi:sec:Preliminaries}
We consider an undiscounted finite-horizon MDP~\citep{puterman1994markov} $M = (\mathcal{S}, \mathcal{A}, \mathbb{P}, r, H)$ with state space $\mathcal{S}$, action space $\mathcal{A}$ and horizon length $H \in\mathbb{N}^+$.
For every $t \in [H] \defeq \{1, \ldots, H\}$, every state-action pair is characterized by a reward $r_t(s,a) \in [0,1]$ and a transition kernel $\mathbb{P}_t(\cdot|s,a)$ over next state.
We assume $\mathcal{S}$ to be a measurable, possibly infinite, space and $\mathcal{A}$ can be any (compact) time and state dependent set (we omit this dependency for brevity).
For any $t \in [H]$ and $(s,a) \in \mathcal{S} \times \mathcal{A}$, the state-action value function of a non-stationary policy $\pi = (\pi_1, \ldots, \pi_H)$ is defined as $ Q^{\pi}_{t}(s,a) = r_{t}(s,a) + \mathbb{E} \left[ \sum_{l = t+1}^{H} r_{l}(s_{l}, \pi_{l}(s_l))\mid s,a \right]$
and the value function is $V^{\pi}_t(s) = Q^{\pi}_t(s, \pi_t(s))$.
Since the horizon is finite, under some regularity conditions, \citep{shreve1978alternative}, there always exists an optimal policy $\pi^\star$ whose value and action-value functions are defined as $\Vstar_t(s) \defeq V^{\pi^\star}_t(s) = \sup_{\pi} V^{\pi}_t(s)$ and $\Qstar_t(s,a) \defeq Q^{\pi^\star}_t(s,a) = \sup_{\pi} Q^{\pi}_t(s,a)$. Both $Q^\pi$ and $Q^\star$ can be conveniently written as the result of the Bellman equations
\begin{align}
Q^{\pi}_t(s,a) &= r_t(s,a) + \mathbb{E}_{s' \sim \mathbb{P}_t(\cdot|s,a)}[V^{\pi}_{t+1}(s')]\\
Q^{\star}_t(s,a) &= r_t(s,a) + \mathbb{E}_{s' \sim \mathbb{P}_t(\cdot|s,a)}[V^{\star}_{t+1}(s')]
\end{align}
where $V^{\pi}_{H+1}(s) =V^{\star}_{H+1}(s) = 0$ and $V^\star_t(s) = \max_{a\in\mathcal{A}} Q^\star_t(s,a)$, for all $s \in \mathcal{S}$. Notice that by boundedness of the reward, for any $t$ and $(s,a)$, all functions $Q^\pi_t$, $V^\pi_t$, $Q^\star_t$, $V^\star_t$ are bounded in $[0, H-t+1]$. 

\paragraph{The learning problem}
The learning agent interacts with the MDP in a sequence of episodes $k \in [K]$ of fixed length $H$ by playing a nonstationary policy $\pi_k = (\pi_{1k}, \ldots, \pi_{Hk})$ where $\pi_{tk} : \mathcal{S} \to \mathcal{A}$.
In each episode, the initial state $s_{1k}$ is chosen arbitrarily and revealed to the agent. 
The learning agent does not know the transition or reward functions, and it relies on the samples (i.e., states and rewards) observed over episodes to improve its performance over time. Finally, we evaluate the performance of an agent by its regret after $K$ episodes: $\textsc{Regret(K)} \defeq \sum_{k=1}^K V^\star_1(s_{1k}) - V^{\pi_k}_1(s_{1k})$. 

\paragraph{Linear function approximation and low-rank MDPs.} 
Whenever the state space $\mathcal{S}$ is too large or continuous, functions above cannot be represented by enumerating their values at each state or state-action pair. A common approach is to define a feature map $\phi_t: \StateSpace  \times\ActionSpace \rightarrow \R^d$, possibly different at any $ t \in [H]$, embedding each state-action pair $(s,a)$ into a $d$-dimensional vector $\phi_t(s,a)$. The action-value functions are then represented as a linear combination between the features $\phi_t$ and a vector parameter $\theta_t\in\R^d$, such that $Q_t(s,a) = \phi_t(s,a)^\top \theta_t$. This representation effectively reduces the complexity of the problem from $\StateSpace \times \ActionSpace$ down to $d$. Nonetheless, $Q^\star_t$ may not fit into the space spanned by $\phi_t$, and approximate value iteration may propagate and accumulate errors over iterations \citep{munos2005error,munos2008finite}, and an exploration algorithm may suffer linear regret. Thus, similar to~\citep{yang2019reinforcement,yang2019sample,jin2019provably}, we consider MPDs that are ``coherent'' with the feature map $\phi_t$ used to represent action-value functions. In particular, we assume that $M$ has (approximately) low-rank transition dynamics and linear reward in $\phi_t$. 

\begin{assumption}[Approximately Low-Rank MDPs]
\label{rlsvi:assMainAssumption}
We assume that for each $ t \in [H] $ there exist a feature map $\psi_t:  \StateSpace   \rightarrow \R^d, \; s \mapsto \psi_t(s)$
and a parameter $\theta^r_t \in \R^d$ such that
the reward can be decomposed as a linear response and a non-linear term:
\begin{align}
\rtsa = \phitsat \theta^r_t + \Delta^r_t(s,a)
\end{align}
and the dynamics are approximately low-rank:
\begin{align}
\label{eqn:LowRankTransition}
\mathbb{P}_t(s' \mid s,a) = \phitsat \psi_t(s') + \Delta_t^P(s'\mid s,a).
\end{align}
We denote by $\epsilon$ an upper bound on the non-linear terms, as follows:
\begin{align}
|\Delta^r_t(s,a) | & \leq \epsilon, \quad \quad \|\Delta^P_t(\cdot \mid s,a) \|_{1} \leq \epsilon.
\end{align}
We further make the following regularity assumptions:
\begin{align}
\label{eqn:regularity}
	\| \phi_t(s,a)\|_2  \leq L_\phi, \; \|\theta^r_t\|_2  \leq L_r, \; \int_{s}\|\psi_t(s)\| \leq L_\psi.
\end{align}
\end{assumption}
An important consequence of Asm.~\ref{rlsvi:assMainAssumption} in the absence of misspecification ($\epsilon = 0)$ is that the Q-function of any policy is linear in the features $ \phi$.
\begin{proposition}
	\label{rlsvi:proplinearQ_maintext}
    If $\epsilon=0$, for every policy $\pi$ and timestep $t \in [H]$ there exists $\theta^\pi_t \in \mathbb{R}^d$ such that
    \begin{align}
    	Q^\pi_t(s,a) = \phi_t(s,a)^\top \theta_t^\pi, \quad \forall (s,a) \in \StateSpace \times \ActionSpace.
    \end{align}
\end{proposition}
\begin{proof}
The definition of low-rank MDP from Asm.~\ref{rlsvi:assMainAssumption} together with the Bellman equation gives:
\begin{align*}
    Q_t^\pi(s,a) &= r_t(s,a) + \E_{s'|s,a}[V_{t+1}^\pi(s')]\\
    &= \phi_t(s,a)^\top \theta_t^r + \int_{s'}\phi_t(s,a)^\top \psi_t(s') V_{t+1}^\pi(s')\\
    &= \phi_t(s,a)^\top \(\theta_t^r + \int_{s'}\psi_t(s') V_{t+1}^\pi(s')\) \numberthis{\label{eqn:linearQ}}
\end{align*}
We define $ \theta_t^\pi $ to be the term inside the parentheses.
\end{proof}
To give further intuition about the assumption, consider the case of finite state and action spaces (again with $ \epsilon = 0$). Then we can write:
\begin{align}
	\mathbb{P}_t(s,a) = \phi_t(s,a)^\top \Psi_t
\end{align}
for a certain $\Psi_t \in \R^{d\times\StateSpace}$. Then for any policy $\pi$ there exists a matrix $\Phi^\pi$ such that the transition matrix of the Markov chain $P^\pi$ can be expressed by a low-rank factorization:
\begin{align}
\label{eqn:LowRankFactorization}
P_t^\pi = \Phi_t^\pi\Psi_t, \quad \Phi_t^\pi \in \R^{S\times d}, \Psi_t \in \R^{d\times S}
\end{align}
where in particular $\Phi^\pi_t$ depends on the policy $\pi$:
\begin{align}
\Phi^\pi_t[s,:] = \phi_t(s,\pi(s))^\top, \quad \Psi_t[:,s'] = \psi_t(s').
\end{align}
Since $\textsc{Rank}(\Phi^\pi_t) \leq d, \textsc{Rank}(\Psi_t)\leq d$ we get $\textsc{Rank}(P^\pi_t) \leq d$ (see \citet{golub2012matrix}).
\section{Algorithm}

Our primary goal in this work is to provide a Thompson sampling (TS)-based algorithm with linear value function approximation with frequentist regret bounds. 
A key challenge in frequentist analyses of TS algorithms is to ensure sufficient exploration using randomized (i.e., perturbed) versions of the estimated model or value function. A common way to obtain effective exploration has been to consider perturbations large enough so that the resulting sampled model or value function is optimistic with a fixed probability~\citep{AgrawalG13, abeille2017linear, russo2019worst}. However, such prior work has only considered the bandit or tabular MDP settings. Here we modify \rlsvi described by
\citet{osband2016generalization} to use an optimistic ``default'' value function during an initial phase and inject carefully-tuned perturbations to enable \emph{frequentist} regret bounds in low-rank MDPs. We refer to the resulting algorithm as \textsc{opt-rlsvi} and we illustrate it in Alg.~\ref{alg:RLSVI}.

\textbf{Gaussian noise to encourage exploration.} 
\textsc{opt-rlsvi} proceeds in episodes. At the beginning of episode $k$ it receives an initial state $s_{1k}$ and runs a value iteration procedure to compute a linear approximation of $Q^\star_t$ at each timestep $t \in [H]$. 
To encourage exploration, the learned parameter $\widehat \theta_{tk}$ is perturbed by adding mean-zero Gaussian noise $\overline \xi_{tk} \sim \mathcal N(0, \sigma^2\Sigma^{-1}_{tk})$, obtaining $\overline{\theta}_{tk} = \widehat{\theta}_{tk} + \overline{\xi}_{tk}$. The perturbation (or \textit{pseudonoise}) $\overline \xi_{tk}$ has variance proportional to the inverse of the regularized design matrix $\Sigma_{tk} = \sum_{i=1}^{k-1} \phi_{ti}\phi_{ti}^\top + \lambda I$, where the $\phi_{ti}$'s are the features encountered in prior episodes; this results in perturbations with higher variance in less explored directions. Finally, we show how to choose the magnitude $ \sigma^2 $ of the variance in Sec.~\ref{euler:sec:NoiseBoundMain} to ensure sufficient exploration.

A key contribution of our work is to prove that this strategy can guarantee reliable exploration under Asm.~\ref{rlsvi:assMainAssumption}. We do this by showing that the algorithm is optimistic with constant probability. Explicitly, we prove that the (random) value function difference $(\Vbar_{1k}-\Vstar_1)(s_{1k})$ can be expressed as a \emph{one-dimensional biased random walk}, which depends on a high probability bound on the environment noise (the bias of the walk) and on the variance of the injected pseudonoise (the variance of the walk). By setting the pseudonoise to have the appropriate variance we can guarantee that the random walk is ``optimistic'' enough that the algorithm explores sufficiently. Unfortunately, it is possible to analyze the algorithm as a random walk only if the value function is not perturbed by clipping; otherwise, one cannot write down the walk and the process is difficult to analyze as further bias is introduced by clipping. However, not clipping the value function may give rise to abnormal values.

\textbf{The issue of abnormal values.} 
A common problem that arises in estimation in RL with function approximation is that as a result of statistical errors combined with the bootstrapping and extrapolation of the next-state value function ~\citep{munos2005error,munos2008finite,farahmand2010error} the value function estimate can take values outside its plausible range.  
A common solution is to ``clip'' the bootstrapped value function into the range of plausible values (in this case, between $0$ and $H$).
This avoids propagating overly abnormal values to the estimated parameters at prior timesteps which would degrade their estimation accuracy.
Clipping the value function is also a solution typically employed in tabular algorithms for exploration \citep{Azar17,Dann17,zanette2019tighter,yang2019reinforcement,dann2019policy}. After adding optimistic bonuses for exploration they ``clip'' the value function above by $H$, which is an upper bound on the true optimal value function. Since $H$ is guaranteed to be an optimistic estimate for $\Vstar$, clipping effectively preserves optimism while keeping the value function bounded for bootstrapping.
However, clipping cannot be easily integrated in our setting as it effectively introduces bias in the pseudonoise and it may ``pessimistically'' affect the value function estimates, reducing the probability of being optimistic. 

\textbf{Default value function.}
To avoid propagating unreasonable values without using clipping, we define a default value function, similar in the spirit to algorithms such as $R_{\max}$~\citep{BrafmanT02}. 
In particular, we assign the maximum plausible value $\overline Q_t(s,a) = H-t+1$ to an uncertain direction $\phi_t(s,a)$  (as measured by the $\|\phi_t(s,a) \|_{\Sigma^{-1}_{tk}}$ norm).
Once a given direction $\phi_t(s,a)$ has been tried a sufficient number of times we can guarantee (under an inductive argument) that the linearity of the representation is accurate enough that with high probability $\phi_t(s,a)^\top\overline \theta_{tk} - \Qstar_t(s,a) \in [-(H-t+1),2(H-t+1)]$. In other words, abnormal values are not going to be encountered, and thus clipping becomes unnecessary. Notice that this accuracy requirement is quite minimal because $\Vstar_t$ has a range of at most $H-t+1$. 

We emphasize that the purpose of the optimistic default function is not to inject further optimism but rather to keep the propagation of the errors under control while ensuring	 optimism.

\textbf{Defining the $\overline Q$ values.}
Finally, we also choose our Q function to interpolate between the ``default'' optimistic value and the linear function of the features as the uncertainty decreases. The main reason is to ensure \emph{continuity} of the function, which facilitates the handling of some of the technical aspects connected to the concentration inequality (in particular in App.~\ref{rlsvi:sec:concentration}).
\begin{definition}[Algorithm Q function]\label{rlsvi:defQdef} For some constants $ \alpha_L, \alpha_U $ and using shorthand for the feature $\phi \defeq \phi_t(s,a)$, the default function $ B_t \defeq H-t + 1$ and the interpolation parameter $\rho \defeq \frac{\Condition{tk}{\phi}- \alpha_L}{\alpha_U-\alpha_L}$ define:
\begin{align*}\label{eqn:defqbar}
\Qbar_{tk}(s,a) &\defeq
\begin{cases}
      \phi^\top \thetabar_{tk}, &\text{if } \Smallcondition{tk}{\phi} \\
      B_t,  &\text{if } \Largecondition{tk}{\phi}\\
      \rho\(\phi^\top \thetabar_{tk}\) + (1-\rho)B_t,  & \text{otherwise.}
    \end{cases}
\end{align*}
\end{definition}

\begin{algorithm}[htb]
   \caption{\textsc{opt-rlsvi}}
   \label{alg:RLSVI}
\begin{algorithmic}[1]
\STATE Initialize $\Sigma_{t1} = \lambda I, \; \forall t \in [H]$; Define $\Vbar_{tk}(s) = \max_a \Qbar_{tk}(s,a)$, with $\Qbar_{tk}(s,a)$ defined in Def.~\ref{rlsvi:defQdef}
\FOR{$k = 1,2,\dots$}
\STATE Receive starting state $s_{1k}$
\STATE Set $ \overline{\theta}_{H+1,k} = 0$
\FOR {$t = H,H-1,\dots,1$}
\STATE $\widehat \theta_{tk} = \Sigma_{tk}^{-1}\(\sum_{i=1}^{k-1} \phi_{ti} [r_{ti} + \overline V_{t+1,k}(s_{t+1,i})]\) $
\STATE Sample $\overline \xi_{tk} \sim \mathcal N(0, \sigma^2 \Sigma^{-1}_{tk} )$ 
\STATE $\overline \theta_{tk} = \widehat \theta_{tk} + \overline \xi_{tk}$
\ENDFOR
\STATE Execute $ \overline \pi_{tk}(s) =  \argmax_{a} \overline Q_{tk}(s,a)$, see Def.~\ref{rlsvi:defQdef}
\STATE Collect trajectories of $ (s_{tk}, a_{tk}, r_{tk})$ for $ t\in [H]$.
\STATE Update  $  \Sigma_{t,k+1} = \Sigma_{tk} + \phi_{tk} \phi_{tk}^\top $ for $ t \in [H]$
\ENDFOR
\end{algorithmic}
\end{algorithm}

\section{Main Result}
We present the first frequentist regret bound for a TS-based algorithm in MDPs with approximate linear reward response and low-rank transition dynamics: 
\begin{theorem}
\label{main:MainResult}
Fix any $ 0 < \delta < \Phi(-1)$ and total number of episodes $ K$. Define $ \delta' = \delta / (16HK)$. Assume Asm.~\ref{rlsvi:assMainAssumption} and set the algorithm parameters $ \lambda = 1$, $\sigma = \sqrt{H\nu_k(\delta')} = \sqrt{H}(\widetilde O(Hd)  + L_\phi ( 3HL_\psi + L_r ) + 4 \epsilon H \sqrt{dk})$,
$ \alpha_U = 1/\widetilde O(\sigma \sqrt{d})$, and $ \alpha_L = \alpha_U/2$ (full definitions with the log terms can be found in App. \ref{rlsvi:sec:good}). Then with probability at least $1-\delta$ the regret of \textsc{opt-rlsvi} up to episode $K$ by:
\begin{align}
\widetilde O\(\sigma dH\sqrt{K}  + \frac{H^2d}{\alpha_L^2} + \epsilon H^2K \).
\end{align}
If we further assume that $ L_\phi = \widetilde O(1)$ and $ L_r, L_\psi = \widetilde O(d)$, then the bound reduces to
\begin{align}\label{eqn:simple_regret}
    \widetilde O\(H^2 d^2 \sqrt{T} + H^5 d^4  + \epsilon dH (1 + \epsilon dH^2) T  \).
\end{align}
\end{theorem}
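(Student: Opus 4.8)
The plan is to follow the template for frequentist analyses of Thompson-sampling algorithms (as in \citet{abeille2017linear,russo2019worst}), decomposing the per-episode regret into a \emph{pessimism} term and an \emph{estimation} term and controlling each on a high-probability good event $\mathcal G$. First I would construct $\mathcal G$, on which the regularized least-squares estimates concentrate: for every $t\in[H]$, episode $k$, and $(s,a)$,
\begin{align*}
\Big|\phi_t(s,a)^\top\thetahat_{tk} - \big(\rtsa + \E_{s'\mid s,a}[\Vbar_{t+1,k}(s')]\big)\Big| \lesssim \nu_k(\delta')\,\|\phi_t(s,a)\|_{\Sigma_{tk}^{-1}} + O(\epsilon H).
\end{align*}
The subtlety here is that the regression target $\Vbar_{t+1,k}$ is itself a random function of the injected pseudonoise, so a plain self-normalized martingale bound does not apply; I would instead establish uniform concentration over the class of value functions the algorithm can produce via a covering argument (this is where the continuity of $\overline Q$ built into Def.~\ref{def:Qdef} and the $\widetilde O(Hd)$ contribution to $\nu_k$ enter, cf. App.~\ref{sec:concentration}), together with a $4\epsilon H\sqrt{dk}$ term absorbing misspecification.

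Next I would prove the key optimism statement: conditioned on $\mathcal G$ and the history, $(\Vbar_{1k}-\Vstar_1)(s_{1k})\ge 0$ holds with probability bounded away from zero. Following the intuition flagged in the text, I would write this difference as a one-dimensional biased random walk over $t=H,\dots,1$: unrolling the Bellman recursion, each timestep contributes a Gaussian increment (variance driven by $\overline\xi_{tk}\sim\mathcal N(0,\sigma^2\Sigma_{tk}^{-1})$) and a bias (bounded on $\mathcal G$ by $\nu_k\|\phi_{tk}\|_{\Sigma_{tk}^{-1}}$). Choosing $\sigma=\sqrt{H\,\nu_k(\delta')}$ makes the per-step noise dominate the per-step bias, and the extra $\sqrt H$ factor is exactly what prevents the optimism probability from decaying with the horizon; the default value $B_t=H-t+1$ assigned to uncertain directions keeps the estimates bounded \emph{without} clipping, which is what permits the walk to be written down at all.

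I would then decompose, for each $k$,
\begin{align*}
V^\star_1(s_{1k})-V^{\pi_k}_1(s_{1k}) = \big[V^\star_1-\Vbar_{1k}\big](s_{1k}) + \big[\Vbar_{1k}-V^{\pi_k}_1\big](s_{1k}).
\end{align*}
The first (pessimism) term is controlled in expectation by the constant-probability optimism: a standard resampling argument bounds it by a constant times the typical perturbation magnitude, i.e. by $\sigma$ times feature norms. The second (estimation) term I would telescope along the executed trajectory via the Bellman equations, turning it into a sum over $t$ of $\phi_{tk}^\top(\thetabar_{tk}-\theta^{\pi_k}_t)$ plus a martingale-difference sequence from the sampled transitions; on $\mathcal G$ each summand is bounded by $(\nu_k+\sigma\sqrt d)\|\phi_{tk}\|_{\Sigma_{tk}^{-1}}$ up to misspecification, with uncertain directions handled separately by the default value. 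Summing over $k\in[K]$, the elliptical-potential (determinant) lemma gives $\sum_k\|\phi_{tk}\|_{\Sigma_{tk}^{-1}}^2\lesssim d\log K$, whence $\sum_k\|\phi_{tk}\|_{\Sigma_{tk}^{-1}}\lesssim\sqrt{Kd\log K}$ by Cauchy--Schwarz, producing the leading $\widetilde O(\sigma dH\sqrt K)$ term; the same potential bounds the number of episodes where a feature is ``large'' (norm $\ge\alpha_U$), yielding the $H^2 d/\alpha_L^2$ term, Azuma--Hoeffding handles the martingale differences, and misspecification accumulates to $\epsilon H^2 K$. Plugging in the stated choices of $\sigma$, $\alpha_U$, $\alpha_L$ and the scalings $L_\phi=\widetilde O(1)$, $L_r,L_\psi=\widetilde O(d)$ then gives \eqref{eqn:simple_regret}.

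The main obstacle is the optimism step. Because perturbed estimates are propagated backward through value iteration, a naive argument would make the probability of optimism decay like $p^H$, which is fatal. The random-walk representation together with the $\sigma\propto\sqrt H$ scaling is precisely what keeps this probability constant, and making it rigorous requires simultaneously (i) the clipping-free, continuous construction of $\overline Q$ so the walk is well defined and its increments are genuinely Gaussian, and (ii) the uniform concentration over the random value-function class from Step~1, so that the bias of the walk is uniformly controlled across all horizons and episodes. Balancing this bias against the injected variance, while ensuring the variance does not inflate the final regret beyond $\widetilde O(\sigma dH\sqrt K)$, is the crux of the argument.
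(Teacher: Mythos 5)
Your proposal follows essentially the same route as the paper: the same pessimism-plus-estimation decomposition, the same covering-based uniform concentration over the algorithm's random value-function class, the same biased-random-walk argument for constant-probability optimism with $\sigma=\sqrt{H\nu_k(\delta')}$, and the same elliptical-potential/Azuma/warmup accounting for the final sum. The only place you compress the paper's argument is the pessimism step, where the paper makes the ``resampling argument'' concrete by introducing i.i.d.\ copies $\widetilde\xi_{tk}$ and a deterministic lower bound $\underline V_{1k}$ obtained by minimizing over all pseudonoise sequences in the high-probability feasible set, then reducing $\overline V_{1k}-\underline V_{1k}$ to two estimation-type terms — but this is the standard device you cite, not a different approach.
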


For the setting of low-rank MDPs a lower bound is currently missing both in terms of statistical rate and regarding the misspecification. Recently, \citet{du2019good,lattimore2019learning,van2019comments} discuss what's possible to achieve regarding the misspecification level while \citet{zanette2020learning} provide a regret lower bound for a setting more general than ours.

For finite action spaces \textsc{opt-rlsvi} can be implemented efficiently in space $O(d^2H+dAHK)$ and time $O(d^2AHK^2)$ where $A$ is the number of actions (Prop.~\ref{rlsvi:propComputationalComplexity} in appendix).

It is useful to compare our result with \citet{yang2019reinforcement} and \citet{jin2019provably} which study a similar setting but with an approach based on deterministic optimism, and with \citet{russo2019worst} which proves worst-case regret bounds of \textsc{Rlsvi} for tabular representations.

\textbf{Comparison with \citet{yang2019reinforcement}.}
Recently, \citet{yang2019reinforcement} studied exploration in finite state-spaces and low-rank transitions. They define a model-based algorithm that tries to learn the ``core matrix'', defined as the middle factor of a three-factor low-rank factorization. While their regularity assumptions on the parameters do not immediately fit in our framework, an important distinction (beyond model-based vs model-free) is that their algorithm potentially needs to compute the value function across all states. This suffers $\Omega(S)$ computational complexity and cannot directly handle continuous state spaces.

\textbf{Comparison with \citet{jin2019provably}.}
A more direct comparison can be done with \citet{jin2019provably} which is based on least-square value iteration (like \textsc{opt-rlsvi}) and uses the same setting as we do when $L_r = L_\psi = \sqrt{d} $ and $L_\phi = 1$. In that case we get the regret in Eqn.~(\ref{eqn:simple_regret}) which is $\sqrt{Hd}$-times worse in the leading term than \citet{jin2019provably}.

In terms of feature dimension $d$, this matches the $\sqrt{d}$ gap in linear bandits between the best bounds for a TS-based algorithm (with regret $\widetilde O(d^{3/2}\sqrt{T})$)  \citep{abeille2017linear} and the best bounds for an optimistic algorithm (with regret $\widetilde O(d\sqrt{T})$) \citep{Abbasi11}. This happens because the proof techniques for Thompson sampling require the perturbations to have sufficient variance to guarantee optimism (and thus exploration) with some probability. 
For a geometric interpretation of this, see \citet{abeille2017linear}. For $H$-horizon MDPs, the total system dimensionality is $dH$, and therefore the extra $\sqrt{dH}$ factor is expected.

\textbf{Comparison with \citet{russo2019worst}.} Recently, \citet{russo2019worst} has analyzed \textsc{Rlsvi} in tabular finite horizon MDPs. While the core algorithm is similar, function approximation does introduce challenges that required changing \textsc{Rlsvi} by, e.g., introducing the default function. While in \citet{russo2019worst} the value function can be bounded in high probability thanks to the non-expansiveness of the Bellman operator associated to the estimated model, in our case this has to be handled explicitly. We think that the use of a default optimistic value function could yield better horizon dependence for \textsc{Rlsvi} in tabular settings, though this would require changing the algorithm.

\section{Proof Outline}
In this section we outline the proof of our regret bound for \textsc{opt-rlsvi}. The four main ingredients are: 1) a one-step expansion of the action-value function difference $\overline Q_{tk}-Q^{\pi_k}_t$ in terms of the next-state value function difference; 2) a high probability bound on the noise and pseudonoise; 3) showing that the algorithm is optimistic with constant probability; 4) combining to get the regret bound.% 
For the sake of clarity, we will assume no misspecification ($\epsilon = 0$), no regularization ($\lambda = 0$), and a nonsinigular design matrix $\Sigma_{tk} = \sum_{i=1}^{k-1} \phi_{ti}\phi_{ti}^\top$. The complete proof is reported in the appendix.

\subsection{One-Step Analysis of Q functions}

In this section we do a ``one-step'' analysis to decompose the difference in Q functions in the case where $\Smallcondition{tk}{\phi_t(s,a)}$ so that $ \overline Q_{tk}$ is linear in the features. The decomposition has three parts: environment noise, pseudonoise, and the difference in value functions at step $ t+1$. It reads $(\overline Q_{tk} - Q_t^\pi)(s,a) =$
\begin{align}\label{eqn:OneStepAnalysis}
	    \phi_t(s,a)^\top (\overline \eta_{tk} + \overline \xi_{tk}) + \E_{s'|s,a}(\overline V_{t+1,k} - V_{t+1}^\pi)(s')
\end{align}
where $\overline\eta_{tk}$ is the projected environment noise defined below in Eqn.~\eqref{eqn:etaDef}.
The complete version of the decomposition is Lem.~\ref{rlsvi:lemQ_decomp} in the appendix, while here we give an informal proof sketch of this fact.

First, since we are assuming that $\Smallcondition{tk}{\phi_t(s,a)}$ and $\epsilon = 0$, we can apply Def.~\ref{rlsvi:defQdef} and Prop.~\ref{rlsvi:proplinearQ_maintext} to write:
\begin{align}
    (\overline Q_{tk} - Q_t^\pi)(s,a) = \phi_t(s,a)^\top (\overline{\theta}_{tk} - \theta_t^\pi).
\end{align}
Decomposing $ \overline \theta_{tk} = \widehat \theta_{tk} + \overline{\xi}_{tk} $ immediately shows how the pseudonoise $ \overline{\xi}_{tk}$ appears in Eqn.~(\ref{eqn:OneStepAnalysis}). Now we need to handle the regression term:
\begin{align}
\label{eqn:thetahatMain}
    \widehat \theta_{tk} \defeq \Sigma_{tk}^{-1} \sum_{i=1}^{k-1}\phi_{ti}(r_{ti} + \overline V_{t+1,k}(s_{t+1,i})).
\end{align}
To handle this, we need to make an expectation over $ s'$ given $s_{ti}, a_{ti}$ (the experienced state and action in timestep $t$ of episode $i$) appear in each term of the sum so that the value function term will become linear in $ \phi_{ti}$. To do this, we define the one-step environment noise with respect to $ \overline V_{t+1, k} $ as
\begin{align}
    \overline\eta_{tk}(i) \defeq  \overline{V}_{t+1,k}(s_{t+1,i}) - \E_{s'|s_{ti},a_{ti}}[\overline{V}_{t+1,k}(s')],
    %\label{eqn:etaDefScalar}
\end{align}
Then we define the projected environment noise as:
\begin{align*}\numberthis{\label{eqn:etaDef}}
    \overline{\eta}_{tk} \defeq \Sigma_{tk}^{-1} \sum_{i=1}^{k-1} \phi_{ti} \overline\eta_{tk}(i).
\end{align*}
Putting this into the definition of $ \widehat \theta_{tk}$ from Eqn.~(\ref{eqn:thetahatMain}),
\begin{align*}
    \widehat \theta_{tk} &=\Sigma_{tk}^{-1} \sum_{i=1}^{k-1}\phi_{ti}(r_{ti} + \E_{s'|s_{ti}, a_{ti}}[\overline V_{t+1,k}(s')] + \overline \eta_{tk}(i)) \\
    &= \overline{\eta}_{tk} + \Sigma_{tk}^{-1} \sum_{i=1}^{k-1}\phi_{ti}(r_{ti} + \E_{s'|s_{ti}, a_{ti}}[\overline V_{t+1,k}(s')]).%\numberthis{}
\end{align*}
But now we note that this reward plus expected value function is linear in the features (thanks to Prop.~\ref{rlsvi:proplinearQ_maintext}), so we can rewrite the second term as
\begin{align}
    &\Sigma_{tk}^{-1} \sum_{i=1}^{k-1}\phi_{ti}\phi_{ti}^\top \(\theta^r + \int_{s'}\psi_t(s')\overline V_{t+1,k}(s')\)\\
    &\qquad = \theta^r + \int_{s'}\psi_t(s')\overline V_{t+1,k}(s').
\end{align}
Finally, comparing with the definition of $ \theta_t^\pi$ (Eqn.~(\ref{eqn:linearQ})) we see that the $ \theta^r$ terms cancel and we get
\begin{align*}
    \overline\theta_{tk} - \theta_t^\pi = \overline\xi_{tk} + \overline \eta_{tk} + \int_{s'}\psi_t(s') (\overline V_{t+1,k} - V_{t+1}^\pi)(s').
\end{align*}
Premultiplying by $\phi_t(s,a)^\top$ gives Eqn.~(\ref{eqn:OneStepAnalysis}).

\subsection{High Probability Bounds on the Noise}
\label{euler:sec:NoiseBoundMain}

To ensure that our estimates concentrate around the true $ Q $ functions, we need to ensure that the $ \overline \eta_{tk}$ and $ \overline{\xi}_{tk}$ are not too large.
This is achieved with similar ideas of self-normalizing processes as is done for linear bandits \citep{Abbasi11}, with an additional union bound over possible value functions $ \overline V_{t+1,k}$ which depend on $\overline \theta_{tk}$ and $\Sigma^{-1}_{tk}$.
In the end, we prove in Lem.~\ref{rlsvi:lemgood_prob} that indeed with high probability for any $ \phi$:
\begin{align}
\label{eqn:eta_bound}
    |\phi^\top \overline \eta_{tk}| \leq  \|\phi \|_{\Sigma^{-1}_{tk}} \| \overline \eta_{tk}\|_{\Sigma_{tk}} \leq \sqrt{\nu_k(\delta')} \|\phi\|_{\Sigma_{tk}^{-1}}
\end{align}
where $\sqrt{\nu_k(\delta')} = \widetilde O(dH)$ is defined fully in App. \ref{rlsvi:sec:good}.
While we defer the computation of the ``right'' amount of pseudonoise to the next subsection, here we mention that for the choice we make $ \overline{\xi}_{tk} \sim \mathcal{N}(0, H\nu_k(\delta')\Sigma_{tk}^{-1})$ we obtain w.h.p.:
\begin{align}
\label{eqn:xi_bound}
    |\phi^\top \overline \xi_{tk}|\leq \| \phi\|_{\Sigma^{-1}_{tk}}\| \overline \xi_{tk} \|_{\Sigma_{tk}}  \leq \sqrt{\gamma_k(\delta')} \|\phi\|_{\Sigma_{tk}^{-1}}
\end{align}
where $\sqrt{\gamma_k(\delta')} = \widetilde O((dH)^{3/2})$ is also defined fully in App. \ref{rlsvi:sec:good}.
Note the pseudonoise \emph{worst-case} bound is $\sqrt{Hd}$ worse than the corresponding environment noise.
\subsection{Stochastic Optimism and Random Walk}
\label{euler:sec:RandomWalkInMain}
We now want to show that \textsc{opt-rlsvi} injects enough pseudonoise that the estimated value function $\Vbar_{1k}(s_{1k})$ at the initial state $s_{1k}$ is optimistic with constant probability (see App.~\ref{rlsvi:sec:optimism}). We call this event $\Ok$:
\begin{align}
\Ok \defeq \Big\{(\overline V_{1k} - \Vstar_1)(s_{1k}) \geq 0 \Big\}.
\end{align}
Note that the optimal policy $\pistar$ maximizes $ Q^\star $ and not the $\Qbar$ computed by the algorithm and thus
\begin{align}
\label{eqn:pi_star_unroll}
    (\overline V_{1k} - \Vstar_1)(s_{1k}) \geq  (\Qbar_{1k} - Q_1^\star)(s_{1k},\pistar_1(s_{1k})).
\end{align}
Now, the goal is to leverage Eqn.~(\ref{eqn:OneStepAnalysis}) to inductively expand this inequality by unrolling a trajectory under the policy $ \pi^\star$. To access the result in Eqn.~(\ref{eqn:OneStepAnalysis}) we need to have $ \Smallcondition{1k}{\phi_1(s_{1k},\pistar_1(s_{1k}))}$. For now, we just assume that this is the case to motivate the idea. In that case, applying Eqn.~(\ref{eqn:OneStepAnalysis}) gives us
\begin{align*}
    &\( \Vbar_{1k} - \Vstar_1 \) (s_{1k}) \geq \phi_1(s_{1k},\pistar_1(s_{1k}))^\top \( \overline \xi_{1k} + \overline \eta_{1k} \) \\
 & \qquad + \E_{s'|s_{1k}, \pi^\star_1(s_{1k})}[ \(\Vbar_{2k} - \Vstar_{2} \)(s')].\numberthis{}
\end{align*}
Now we can inductively apply the same reasoning to the term inside of the expectation (assuming that we always get features with small $\Sigma^{-1}$-norm). Using $ x_t $ to denote the states sampled under $\pi^\star$ to avoid confusion with $ s_{tk} $ observed by the algorithm, we get
\begin{align}
\label{eqn:WalkInMain}
    \geq \sum_{t=1}^H \E_{x_t \sim \pi^\star|s_{1k}} \left[ \phi_t(x_t, \pi^\star_t(x_t))^\top (\overline\xi_{tk} + \overline \eta_{tk}) \right]
\end{align}
Since these trajectories over $ x $ come from $ \pi^\star$ and the environment, they do not depend on the algorithm's policy and with respect to the pseudonoise $ \overline \xi$, they are non-random. If we let $ \phi_t^\star$ denote $ \E_{x_t \sim \pi^\star|s_{1k}} \phi_t(x_t, \pi^\star(x_t))$, and apply Eqn.~(\ref{eqn:eta_bound}) we get with probability at least $ 1 - \delta$ that:
\begin{align*}
    &\sum_{t=1}^H (\phi_t^\star)^\top (\overline\xi_{tk} + \overline \eta_{tk}) \geq \sum_{t=1}^H [(\phi_t^\star)^\top \overline\xi_{tk} -  \sqrt{\nu_k(\delta')}\| \phi_t^\star \|_{\Sigma_{tk}^{-1}}]\\
    & \geq \sum_{t=1}^H (\phi_t^\star)^\top \overline\xi_{tk} - \sqrt{H\nu_k(\delta')} \(\sum_{t=1}^H  \| \phi_t^\star \|^2_{\Sigma_{tk}^{-1}}\)^{1/2}\numberthis{\label{eqn:optimsim_cs}}
\end{align*}
where the second inequality is Cauchy-Schwarz.

Note that the only randomness in this quantity comes from the pseudonoise we inject.
We can think of this sum as a one-dimensional normal random walk over  $ H $ steps with a negative bias.
Moreover, if we chose each $ \overline{\xi}_{tk} \sim \mathcal{N}(0, H\nu_k(\delta')\Sigma_{tk}^{-1})$, we know that
%\footnote{For fixed vector $\phi$ we have that $\phi^\top\xi \sim \mathcal N(0,\phi^\top\Sigma^{-1}\phi)$ if $\xi \sim \mathcal N(0,\Sigma^{-1})$; and that $\phi^\top\Sigma^{-1}\phi = \|\phi \|^2_{\Sigma^{-1}}$. Independence then allows us to bring the summation inside.}
\begin{align}
    \sum_{t=1}^H (\phi_t^\star)^\top \overline\xi_{tk} \sim \mathcal N \(0, \sum_{t=1}^H H \nu_k(\delta')\|\phi_t^\star\|^2_{\Sigma_{tk}^{-1}}\).
\end{align}
Comparing this with Eqn.~(\ref{eqn:optimsim_cs}) we can immediately see that the standard deviation of the sum of pseudonoise terms is exactly the bound on the bias induced by the high probability bound on the sum of the environment noise $ \overline\eta_{tk}$. Thus we can conclude that
\begin{align}
    \Pro \( (\overline V_{1k} - V_1^\star)(s_{1k}) \geq 0\) \geq \Phi(-1)
\end{align}
where $ \Phi$ is the normal CDF. This is just the result that we are looking for. However, this presentation avoided the technicalities of handling the cases where $ \|\phi_t(x_t, \pi^\star_t(x_t))\|_{\Sigma_{tk}^{-1}} > \alpha_L$ and $ \overline Q_{tk}$ takes the default value. At a high level the default value is optimistic and so it cannot reduce the probability of optimism. This is handled carefully in Lem.~\ref{rlsvi:lemOptimisticRecursion} and \ref{lem:Optimism} of the appendix,
where we obtain a recursion structurally similar to Eqn.~(\ref{eqn:optimsim_cs}) albeit with a less interpretable definition of $\phi^\star_t$.
One important detail is that our choice of when to default does not depend on the $\overline{\xi}_{tk}$ and is thus non-random with respect to the pseudonoise.

\subsection{High Probability Regret Bound}
\label{euler:sec:RegretBoundMain}
In this section we provide a high level sketch of the main argument that allows us to obtain a high probability regret bound for \textsc{opt-rlsvi} under Asm.~\ref{rlsvi:assMainAssumption}. In particular, we assume that the ``good event'' holds, which lets us use the bounds in Eqn.~(\ref{eqn:eta_bound}) and (\ref{eqn:xi_bound}).

First, we recall the definition of regret up to episode $K$ from the preliminaries and further add and subtract the randomized value functions $\overline V_{1k}$ to get that $ \textsc{Regret(K)}$ decomposes as
\begin{align*}
 \numberthis{\label{eqn:RegretDef}} \sumk\Big( \underbrace{\Vstar_{1} - \overline V_{1k}}_{\text{Pessimism}} + \underbrace{\overline V_{1k} - V^{\pi_k}_{1}}_{\text{Estimation}} \Big) (s_{1k})
\end{align*}
\subsubsection{Bound on estimation}
\label{euler:sec:ConcentrationBoundMain}
We need to distinguish between cases where $\Smallcondition{tk}{\phi_{tk}}$, which we will denote by $\mathcal S_{tk}$ for small feature, or not, which we will denote by $\mathcal S^c_{tk}$ for its complement. Under $ \mathcal S_{tk}$ linearity of the representation can be used via Eqn.~(\ref{eqn:OneStepAnalysis}) and under $ \mathcal{S}^c_{tk}$ we can use the trivial upper bound of $H$ on the difference in values:
\begin{align*}
	& \( \overline V_{1k} - V_1^{\pi_k} \)(s_{1k})\leq H\1\{\mathcal S^c_{1k}\} + \numberthis{}\\
	& \Big( \phi_{1k}^\top\(\overline \xi_{1k} + \overline\eta_{1k} \) +\underbrace{\E_{s' \mid s_{1k},a_{1k}} [ \( \overline V_{2k} - V_2^{\pi_k}\)(s') ]}_{= \dot\zeta_{1k} + (\overline V_{2k} - V_2^{\pi_k})(s_{2k})}\Big)\1\{\mathcal S_{1k}\}
\end{align*}
where $ \dot \zeta_{tk} \defeq \1\{\mathcal S_{1k}\} \big( \E_{s' \mid s_{tk},a_{tk}}  \( \overline V_{t+1,k} - V_{t+1}^{\pi_k}\)(s') -(\overline V_{t+1,k} - V_{t+1}^{\pi_k})(s_{t+1,k}) \big)$ is a bounded martingale difference sequence on the good event.
Induction and summing over $k$ eventually yields:
\begin{align*}
	& \leq \sumk \sum_{t=1}^{H} \underbrace{H\1\{\mathcal S^c_{tk}\}}_{\text{Warmup}}  + \underbrace{\phi_{tk}^\top\(\overline \xi_{tk} + \overline \eta_{tk} \) \1\{\mathcal S_{tk}\}}_{\text{Linear Regime}} + \underbrace{\dot \zeta_{tk} \1\{\mathcal S_{tk}\}}_{\text{Martingale}}.
\end{align*}
The martingale term can be bounded with high probability by $ \tilde O(H\sqrt{T})$ using Azuma-Hoeffding.

The first term measures regret during ``warmup'', when the algorithm cannot guarantee that the value function estimates are bounded and needs to use the default function. In Lem.~\ref{rlsvi:lemWarmup} we bound it and obtain:
\begin{align}
\widetilde O\(\frac{H^2d}{\alpha_L^2}\) = \widetilde O \(H^5d^4\)
\end{align}
which is $\sqrt{T}$-free and is thus a lower order term.

For the dominant linear regime term we can use the high probability bounds from Eqn.~\eqref{eqn:eta_bound} and~\eqref{eqn:xi_bound} along with two applications of Cauchy-Schwarz:
\begin{align*}
&  \leq \sumk \sum_{t=1}^{H} \| \phi_{tk}\|_{\Sigma^{-1}_{tk}}\( \sqrt{\gamma_k(\delta')} +  \sqrt{\nu_k(\delta')} \) \numberthis{} \\
& \leq \sqrt{K} \times \sum_{t=1}^H \underbrace{\sqrt{\sumk \| \phi_{tk}\|^2_{\Sigma^{-1}_{tk}}}}_{\widetilde O(\sqrt{d})} \times  \Big(\underbrace{\vphantom{\sqrt{\sumk}}\sqrt{\gamma_K(\delta')}}_{\widetilde O(H^{3/2}d^{3/2})} + \underbrace{\vphantom{\sqrt{\sumk}} \sqrt{\nu_K(\delta')}}_{\widetilde O(Hd)} \Big)
\end{align*}
This final bound on the sum of the squared norm of the features is a standard quantity that arises in linear bandit computations \citep{Abbasi11}.
We can see that the estimation term gives the same regret bound reported in the Thm.~\ref{main:MainResult}. Now we show that the pessimism term is of the same order.

\subsubsection{Bound on Pessimism}
\label{euler:sec:PessimismBoundMain}
For optimistic algorithms the pessimism term of the regret $\sumk (\Vstar_{1} - \overline V_{1k}^{\pi_k})(s_{1k})$ is negative by construction; here we need to work a little more. As seen above, the algorithm has at least a \emph{constant} probability of being optimistic. When it is, it makes progress similar to a deterministic optimistic  algorithm, and when it is not, it is still choosing a reasonable policy (using shrinking confidence intervals) so that the mistakes it makes become less and less severe. Ultimately, we would like to transform the pessimism term into an estimation argument that we can handle as before. So, we first upper bound $\Vstar_{1}$ and then lower bound $\overline V_{1k}$ by randomized value functions with specific choices for the pseudonoise. As more samples are collected, the pseudonoise shrinks and the estimates converge.

\paragraph{Upper Bound on $\Vstar_1$.}
Consider drawing $\widetilde \xi_{tk}$'s defined as independent and identically distributed copies of the $\overline \xi_{tk}$'s. Let $\widetilde{\mathcal O}_k$ be the event that in episode $k$ the algorithm obtains an optimistic value function $\widetilde V_{1k}$ using these $\widetilde \xi_{tk}$ in place of $ \overline{\xi}_{tk}$. Explicitly,
\begin{align}
	\widetilde{\mathcal O}_k = \{(\widetilde V_{1k}-\Vstar_{1k})(s_{1k}) \geq 0 \}.
\end{align}
Note that since the $\widetilde\xi_{tk} $ are iid copies of the $ \overline{\xi}_{tk}$ we have that $ \Pro(\widetilde{\mathcal O}_k) $ is equal to $\Pro(\mathcal O_k) = \Phi(-1)$ from Sec.~\ref{euler:sec:RandomWalkInMain}. Taking conditional expectation $\E_{\widetilde \xi | \widetilde{\mathcal O}_k}$ over the $\widetilde \xi_{tk}$ for $ t \in [H]$ gives us an upper bound:
 \begin{align}
 \label{eqn:UBmain}
\Vstar_{1k}(s_{1k}) & \leq \E_{\widetilde\xi|\widetilde{\mathcal O}_k} \widetilde V_{1k}(s_{1k})
\end{align}
by definition of the event $\widetilde{\mathcal O}_k$.
\paragraph{Lower Bound on $\overline V_{1k}$.}
Under the high probability bound on the pseudonoise of Eqn.~(\ref{eqn:xi_bound}) we consider the below optimization program over the \emph{optimization variables} $\xi_{tk}$'s, which are constrained to satisfy the same bound on the pseudonoise of Eqn.~(\ref{eqn:xi_bound}):
\begin{align*}
\numberthis{\label{eqn:OptimProgramMain}}
& \min_{\{\xi_{tk} \}_{t=1,\dots,H}}  V^{\xi}_{1k}(s_{1k}) \\
& \|\xi_{tk}\|_{\Sigma_{tk}}  \leq \sqrt{\gamma_{k}(\delta')}, \quad \forall t \in [H]
\end{align*}
where $ V^\xi_{1k}$ is analogous to $ \overline{V}_{1k}$ derived from our algorithm, but with the optimization variables $ \xi_{tk}$ in place of $ \overline{\xi}_{tk}$.
Solving the program above would give a value function $\underline{V}_{1k}$ such that:
\begin{align}
\label{eqn:LBmain}
\underline V_{1k}(s_{1k}) \leq \overline V_{1k}(s_{1k})
\end{align}
whenever the $ \overline{\xi}_{tk}$'s obey the high probability bound.

\paragraph{Putting it together.}
Now we chain the upper bound of Eqn.~(\ref{eqn:UBmain}) with the lower bound of Eqn.~(\ref{eqn:LBmain}):
\begin{align}
\label{eqn:main01}
\(\Vstar_{1k}-\overline V_{1k}\)(s_{1k})
& \leq \E_{\widetilde\xi| \widetilde{\mathcal O}_{k}}[( \widetilde V_{1k}-\underline V_{1k})(s_{1k})].
\end{align}
We can connect this conditional expectation with the probability of optimism to get to a concentration bound by applying the law of total expectation:
%and use $ \widetilde{\mathcal O}_{k}^c$ to denote the complement of $\widetilde{\mathcal O}_{k}$:
\begin{align*}
   \E_{\tilde \xi} [(\widetilde V_{1k} - &\underline V_{1k} )(s_{1k})] = \E_{\widetilde\xi|\widetilde{\mathcal O}_{k}}[ (\widetilde V_{1k} - \underline V_{1k} )(s_{1k})] \Pro(\widetilde{\mathcal O}_k)  \\
 & + \underbrace{\E_{\widetilde\xi|\widetilde{\mathcal O}_{k}^c} [(\widetilde V_{1k} - \underline V_{1k})(s_{1k})] \Pro(\widetilde{\mathcal O}_k^c)}_{ \geq 0}.\numberthis{}
\end{align*}
This inequality holds by the same reasoning as Eqn.~(\ref{eqn:LBmain}) with high probability since the $ \widetilde\xi_{tk}$ are also in the set over which $ \underline{V}_{1k}$ is minimized.
Dividing by $ \Pro(\widetilde{\mathcal O}_k)$ and chaining with Eqn.~(\ref{eqn:main01}) gives us:
\begin{align*}
\(\Vstar_{1k} - \overline V_{1k}\)(s_{1k}) \leq \E_{\widetilde \xi}[(\widetilde V_{1k} - \underline V_{1k} )(s_{1k})]/\Pro(\widetilde{\mathcal O}_k).
\end{align*}
Now, since the $\widetilde \xi_{tk}$ are iid copies of the $\overline \xi_{tk}$ that the algorithm computes we have that $ \E_{\widetilde\xi}[\widetilde V_{1k}(s_{1k})] = \E_{\overline\xi}[\overline V_{1k}(s_{1k})]$ and $ \Pro(\mathcal O_k) = \Pro(\widetilde{\mathcal O}_k)$. So we can define a martingale difference sequence $ \ddot \zeta_k \defeq \E_{\widetilde\xi}[\widetilde V_{1k}(s_{1k})] - \overline V_{1k}(s_{1k})$ and get our final bound on the pessimism as:
\begin{align}
\label{eqn:alongtheway}
\(\Vstar_{1k} - \overline V_{1k}\)(s_{1k}) \leq \frac{(\overline V_{1k}  - \underline V_{1k})(s_{1k})+ \ddot \zeta_k }{\Pro(\mathcal O_k)}.
\end{align}
When summing over the episodes $k\in[K]$, the martingale can be bounded with high probability by Azuma-Hoeffding as $\sumk \ddot \zeta_k = \widetilde O(H\sqrt{K})$. To bound the remaining term we add and subtract $ V_1^{\pi_k}$ to get:
\begin{align*}
   \(\sumk [(\overline V_{1k} - V_1^{\pi_k})(s_{1k}) + (V_1^{\pi_k} - \underline V_{1k})(s_{1k})]\) / \Pro(\Ok).
\end{align*}
Each of these is  bounded by arguments similar to those in Sec.~\ref{euler:sec:ConcentrationBoundMain}. We discuss this in detail in Lem.~\ref{rlsvi:lemPessimism}.

It is instructive to re-examine Eqn.~(\ref{eqn:alongtheway}), ignoring the martingale term. While the left hand side is negative for optimistic algorithms, for \textsc{opt-rlsvi} it is upper bounded by a difference in estimated value functions (which shrinks with more data) times the inverse probability of being optimistic $1/\Pro({\mathcal O}_k)$. In other words, roughly once every $1/\Pro(\mathcal O_k)$ episodes the algorithm is optimistic and exploration progress is made.

\section{Concluding Remarks}
This work proposes the first high probability regret bounds for (a modified version of) \textsc{Rlsvi} with function approximation, confirming its sound exploration principles. Perhaps unsurprisingly, we inherit an extra $\sqrt{dH}$ regret factor compared to an optimistic approach which can be explained by analogy to the bandit literature. Whether Thompson sampling-based algorithms need to suffer this extra factor compared to their optimistic counterparts remains a fundamental research question in exploration. Our work enriches the literature on provably efficient exploration algorithms with function approximation with a new algorithmic design as well as a new set of analytical techniques.

\subsection*{Acknowledgments}
Andrea Zanette is partially supported by the Total Innovation Fellowship program. David Brandfonbrener is supported by the Department of Defense (DoD) through the National Defense Science \& Engineering Graduate Fellowship (NDSEG) Program. We would also like to thank Haque Ishfaq for pointing out a small error in a previous version of the paper.
In addition, we thank Taehyun Hwang, Min-hwan Oh and Qiwen Cui for pointing out another small error in the proof.

\bibliography{rl}

\newpage
\appendix
\onecolumn
\addcontentsline{toc}{section}{Appendix} 
\part{Appendix} 
\parttoc
\newpage
\section{Notation}\label{sec:notation}
We provide this table for easy reference. Notation will also be defined as it is introduced.

We denote with $H$ the episode length, with $K$ the total number of episodes, and with $T = HK$ the time elapsed.
We denote with $k \in [K]$ the current episode, with $t \in [H]$ the current timestep. We use the subscript $tk$ to indicate the quantity at timestep $t$ of episode $k$ and $ t+1,k$ for the subsequent step.

\renewcommand{\arraystretch}{1.5}
\begin{longtable}{l l l}
\caption{Symbols}\\
\hline
$s_{tk}  $ & $\defeq$& state encountered in timestep $t$ of episode $k$\\
$a_{tk}  $ & $\defeq$& action taken by the algorithm in timestep $ t $ of episode $ k$ \\
$\phi_{tk} $ & $\defeq$& $\phi_t(s_{tk},a_{tk})$ \\
$r_{tk} $ & $\defeq$& $r_t(s_{tk},a_{tk})$ \\
$\lambda $ & $\defeq$& regularization parameter\\
$\Sigma_{tk} $ & $\defeq$ & $\sum_{i=1}^{k-1} \phi_{ti}\phi_{ti}^\top + \lambda I$\\
$\thetahat_{tk}$ & $\defeq$& $ \Sigma_{tk}^{-1}\(\sum_{i=1}^{k-1} \phi_{ti} [r_{ti} + \overline V_{t+1,k}(s_{t+1,i})]\)$ \\
$\delta'$ & $\defeq$ & $\delta / (16HK)$\\
$\sqrt{\beta_k(\delta')}$ & $\defeq$ & $c_1 Hd \sqrt{\log \(\frac{Hdk \max(1,L_\phi) \max(1,L_\psi) \max(1,L_r) \lambda}{\delta'}\)}$\\
$\sqrt{\nu_k(\delta')}$ & $ \defeq$ & $ \sqrt{\beta_k(\delta')}  + \sqrt{\lambda} L_\phi ( 3HL_\psi + L_r ) + 4 \epsilon H \sqrt{dk}$\\
$\sqrt{\gamma_{k}(\delta')}  $ & $\defeq$& $c_2\sqrt{dH \nu_{k}(\delta') \log(d/\delta')}$\\
$\overline \xi_{tk} $ & $\defeq$& Pseudonoise distributed as $\mathcal N(0,H\nu_{k}(\delta')\Sigma^{-1}_{tk}) $\\
$\overline \theta_{tk}$ & $\defeq$& $\thetahat_{tk} + \overline \xi_{tk}$ \\
$\alpha_U$ & $\defeq$ & $\frac{1}{4(\sqrt{\gamma_k(\delta')})} $\\
$\alpha_L$ & $\defeq$ & $\frac{\alpha_U}{2} $\\
$\qo{tk}(s,a)$ & $\defeq$& $
\begin{cases}
      \phi^\top \thetabar_{tk}, &\text{if } \Smallcondition{tk}{\phi_t(s,a)} \\
      H-t+1,  &\text{if } \Largecondition{tk}{\phi_t(s,a)}\\
      \frac{\alpha_U - \|\phi_t(s,a)\|_{\Sigma_{tk}^{-1}}}{\alpha_U - \alpha_L}\(\phi^\top \thetabar_{tk}\) + \frac{ \|\phi_t(s,a)\|_{\Sigma_{tk}^{-1}} - \alpha_L}{\alpha_U - \alpha_L}(H-t+1),  & \text{otherwise}
    \end{cases}
$\\
$\vo_{tk}(s)$ & $\defeq$& $\max_{a} \qo{tk}(s,a)$ \\
$\pi_{k}(s)$ & $\defeq$& policy executed by the algorithm in episode $ k$, i.e. $\argmax_{a} \qo{tk}(s,a)$ \\
$\mathcal S_{tk} $ & $\defeq$& Event $ \Big\{\Smallcondition{tk}{\phi_{tk}} \Big\}$ \\
$\mathcal S^c_{tk} $ & $\defeq$& Event $ \Big\{\NonSmallcondition{tk}{\phi_{tk}} \Big\}$ (complement of $\mathcal S_{tk}$)\\
$ L_\phi $ & $ \defeq$ & upper bound on $ \|\phi\|$ \\
$ L_\psi $ & $ \defeq$ & upper bound on $ \int_{s} \|\psi_t(s')\|$ for all $ t \in [H]$\\
$ L_r $ & $ \defeq$ & upper bound on $ \|\theta_r\|$\\
$ L_\theta $ & $ \defeq$ & upper bound on $\|\theta_t^\pi\|$ (equal to $ L_r + (H-1)L_\psi$) \\
$ \Delta_t^P(\cdot|s,a) $ & $ \defeq$ & $ \mathbb{P}_t(\cdot|s,a) - \phi(s,a)_t^\top \psi_t(\cdot)$\\
$ \Delta_t^r(s,a) $ & $ \defeq$ & $ r_t(s,a) - \phi(s,a)_t^\top \theta_t^{r}$\\
$ \epsilon $ & $ \defeq$ & bound on $|\Delta_t^r(s,a)|$ and $ \|\Delta_t^P(\cdot|s,a)\|_1$\\
$ \overline \eta_{tk}$ & $ \defeq$ & $ \Sigma_{tk}^{-1} \sum_{i=1}^{k-1} \phi_{ti}\bigg( \overline V_{t+1,k}(s_{t+1,i}) - \E_{s'|s_{it}, a_{it}}[\overline V_{t+1,k}(s')]\bigg)$\\
$ \overline \lambda_{tk}^\pi$ & $ \defeq$ & $-\lambda \Sigma_{tk}^{-1}\bigg(\int_{s'} \psi_t(s') (\overline V_{t+1,k} - V_{t+1}^\pi)(s') + \theta_t^\pi \bigg)$\\
$ \Delta_t^\pi(s,a)$ & $ \defeq$ & $ Q_t^\pi(s,a) - \phi_t(s,a)^\top \theta_t^\pi$\\
$ \overline m_{tk}^\pi$ & $ \defeq$ & $ \phi_t(s,a)^\top\Sigma_{tk}^{-1}\sum_{i=1}^{k-1} \phi_{ti} \bigg[\Delta_t^r(s_{ti}, a_{ti}) + \int_{s'}\Delta_t^P(s'|s_{ti}, a_{ti})\overline V_{t+1,k}(s') \bigg] + \Delta_{t}^\pi(s,a)$\\ & &$- \int_{s'} \Delta_t^P(s'|s,a)(\overline V_{t+1,k}- V_{t+1}^\pi)(s')$\\
$ \mathcal{H}_{tk}$ & $ \defeq$ & $\{s_{ij}, a_{ij}, r_{ij}: j \leq k,\quad i\leq t \text{ if } j=k \text{ else } i \leq H\}$\\
$ \overline{\mathcal{H}}_{tk}$ & $ \defeq$ & $\mathcal{H}_{Hk} \ \bigcup\  \{\overline{\xi}_{ik}: i \geq t\}$\\
$ \mathcal{G}^{\overline{\xi}}_{tk}$ & $ \defeq$ & $\bigg\{ |\phi_t(s,a)^\top \overline{\xi}_{tk}| \leq \sqrt{\gamma_k(\delta')} \|\phi_t(s,a)\|_{\Sigma_{tk}^{-1}}\bigg\}$ \\
$ \mathcal{G}^{\overline{\eta}}_{tk}$ & $ \defeq$ & $\bigg\{ |\phi_t(s,a)^\top \overline \eta_{tk}| \leq \sqrt{\beta_k(\delta')} \|\phi_t(s,a)\|_{\Sigma_{tk}^{-1}}\bigg\}$\\
$ \mathcal{G}^{\overline{\lambda}}_{tk}$ & $ \defeq$ & $\bigg\{\forall \ \pi, \quad |\phi_t(s,a)^\top\overline \lambda^\pi_{tk}| \leq \sqrt{\lambda} L_\phi ( 3HL_\psi + L_r ) \|\phi_t(s,a)\|_{\Sigma_{tk}^{-1}}\bigg\}$\\
$ \mathcal{G}^{\overline{m}}_{tk}$ & $ \defeq$ & $\bigg\{\forall \ \pi, \quad |\overline m_{tk}^\pi (s,a)| \leq 4 \epsilon H (\sqrt{dk}\|\phi_t(s,a)\|_{ \Sigma_{tk}^{-1}} + 1)  \bigg\}$ \\
$ \mathcal{G}^{\overline{Q}}_{tk}$ & $ \defeq$ & $\bigg\{\forall
        \ s,a,\quad |(\overline Q_{tk} - Q_t^\star)(s,a)| \leq H - t + 1\bigg\}$ \\
$ \overline {\mathcal{G}}_{tk}$ & $ \defeq$ & $\{  \mathcal{G}^{\overline{\xi}}_{tk} \cap \mathcal{G}^{\overline \eta}_{tk} \cap \mathcal{G}^{\overline \lambda}_{tk} \cap \mathcal{G}^{\overline m}_{tk} \cap \mathcal{G}^{\overline Q}_{tk}\}$\\
$ \overline {\mathcal{G}}_{k}$ & $ \defeq$ & $\bigcap_{ t \in [H]} \overline{\mathcal{G}}_{tk}$\\
$\widetilde \xi_{tk} $ & $\defeq$& i.i.d. copy of the pseudonoise $\widetilde \xi_{tk}$, useful for the regret proof. \\
& & All overline quantities can be translated to tilde \\
& & by exchanging pseudonoise variables in the value iteration.\\
$ \widetilde {\mathcal{O}}_{k}$ & $ \defeq$ & $\Big\{\(\widetilde V_{1k} - \Vstar_{1}\)(s_{1k}) \geq - 4 H^2\epsilon\Big\}$\\
%$ \mathcal{S}_{tk}$ & $ \defeq$ & $ \Big\{\|\phi_{tk}\|_{\Sigma_{tk}^{-1}} \leq \alpha_L \Big\}$
\label{rlsvi:tab:MainNotation}
\end{longtable}

\newpage

\section{Assumptions}\label{sec:assumptions}

In this section we formally present the main assumption that the MDP is approximately low-rank and show that the definition immediately implies the existence of approximately linear $ Q $ functions for any policy. Moreover, the corresponding parameters to these $ Q$ functions have bounded norm.

\begin{assumption}[$\epsilon$-approximate low-rank MDP]\label{rlsvi:assapprox_lowrank}\citep{jin2019provably, yang2019reinforcement}
For any $ \epsilon \leq 1$, an MDP $ (\mathcal{S}, \mathcal{A}, H, \mathbb{P}, r)$ is $ \epsilon$-approximate low-rank with feature maps $ \phi_t: \mathcal{S} \times \mathcal{A} \to \R^d $ if for every $ t \in [H]$ there exists an unknown function $ \psi_t : \mathcal{S} \to \R^d$ and an unknown vector $ \theta_t^{r} \in \R^d$ such that
\begin{align}
    \|\mathbb{P}_t(\cdot|s,a) - \phi(s,a)_t^\top \psi_t(\cdot) \|_{1} \leq \epsilon, \qquad |r_t(s,a) - \phi(s,a)_t^\top \theta_t^{r}| \leq \epsilon.
\end{align}
Moreover assume the bounds
\begin{enumerate}
    \item $\|\phi_t(s,a) \| \leq L_\phi$ for all $ (s,a) \in \mathcal{S} \times \mathcal{A}$ and $ t \in [H]$. %\info{We can set $ L_\phi = 1$}
    \item $ \int_{\mathcal{S}}\|\psi_t(s)\| \leq L_\psi$ for all $ t \in [H]$. %\info{We can set $ L_\psi = \sqrt{d}$}
    \item $ \| \theta_t^{r}\| \leq L_{r}$ for all $ t \in [H]$. %\info{We can set $ L_r = \sqrt{d}$}
\end{enumerate}
\end{assumption}

\begin{definition}[Misspecification]\label{rlsvi:defmisspec}
We can define the following misspecification quantities
\begin{align}
    \Delta_t^P(\cdot|s,a) \defeq \mathbb{P}_t(\cdot|s,a) - \phi(s,a)_t^\top \psi_t(\cdot)&,\qquad \|\Delta_t^P( \cdot| s,a) \|_1 = \int_{s'} \big|\Delta_t^P(s' | s,a)\big| \leq \epsilon\\
    \Delta_t^r(s,a) \defeq r_t(s,a) - \phi(s,a)_t^\top \theta_t^{r}&,\qquad |\Delta_t^r(s,a)| \leq \epsilon
\end{align}
where the inequalities follow from the Assumption \ref{rlsvi:assapprox_lowrank}.
\end{definition}

\begin{corollary}[Linear Q functions]\label{cor:linear_Q}
For any policy $ \pi$, there exist some $ \theta_t^\pi \in \R^d$ for all $ t \in [H]$ such that for all $ s,a$
\begin{align}
    |Q^{\pi}_t(s,a) - \phi(s,a)_t^\top \theta_t^\pi| \leq (H - t +1)\epsilon.
\end{align}
Moreover, $ \|\theta_t^\pi\| \leq L_r + (H-t) L_\psi \defeq L_\theta$. 
\end{corollary}
\begin{proof}
Since $Q_t^\pi(s,a) = \phi(s,a)^\top \left( \theta_t^r + \int \psi(s') V^\pi_{t+1} (s') \mathrm{d}s' \right)$, we set
\begin{align} \label{eqn:thetapi}
    \theta_t^\pi = \theta_t^r + \int_{s'}\psi_t(s')V_{t+1}^\pi(s')
\end{align}
Note that by the assumption that the rewards are in $ [0,1] $ the true value functions $ V_t^\pi$ are always in $ [0,H- t + 1]$. By the triangle inequality and Bellman equation followed by an application of Definition \ref{rlsvi:defmisspec}
\begin{align}
    |Q^{\pi}_t(s,a) - \phi_t(s,a)^\top \theta_t^\pi| &\leq |r_t(s,a) - \phi(s,a)_t^\top \theta_t^r| + \bigg|\E_{s'|s,a}[V^{\pi}_{t+1}(s')] - \phi_t(s,a)^\top \int_{s'}\psi_t(s')V_{t+1}^\pi(s')\bigg|\\
    &\leq \epsilon + \bigg| \int_{s'} (P_t(s'|s,a) -  \phi_t(s,a)^\top\psi_t(s'))V_{t+1}^\pi(s')\bigg|\\
    &\leq \epsilon + \|V_{t+1}^\pi \|_{\infty} \|\Delta_t^P(\cdot|s,a)\|_{1} \leq \epsilon + (H -t)\epsilon = (H-t+1)\epsilon
\end{align}
To prove the second part of the statement, note that by the triangle inequality and Assumption \ref{rlsvi:assapprox_lowrank}
\begin{align}
    \| \theta_t^\pi\| \leq \| \theta_t^r \| + \|\int_{s'}\psi_t(s')V_{t+1}^\pi(s')\| \leq L_r + \|V_{t+1}^\pi\|_\infty L_\psi \leq L_r + (H-t)L_\psi.
\end{align}
\end{proof}

\begin{definition}[Optimal parameters]
    We can denote the parameters associated with the optimal policy $ \pi^\star$ as $ \theta_t^\star = \theta_t^{\star, P} + \theta_t^{r}$.
\end{definition}

\newpage
\section{Decomposition of Unclipped Q-values}\label{sec:decomp}
In this section we prove the main decomposition lemma that will be useful throughout. The lemma decomposes the difference between the function defined by the estimated $ \overline{\theta}_{tk}$ and the true $ Q^\pi$ for any policy $ \pi$ into several parts: the {\color{NavyBlue}expected difference of corresponding value functions at the next state}, the {\color{BrickRed}projected environment noise}, the {\color{ForestGreen}pseudonoise}, a  {\color{DarkOrchid}term due to the regularizer $ \lambda$} and a term due to the {\color{RawSienna}misspecification} (i.e. the $ \epsilon $ error) of the low-rank MDP.

These terms are defined in the following notation:
\begin{align}
    \label{eq:def_etatk}
    \overline \eta_{tk} &\defeq \Sigma_{tk}^{-1} \sum_{i=1}^{k-1} \phi_{ti}\bigg( \overline V_{t+1,k}(s_{t+1,i}) - \E_{s'|s_{ti}, a_{it}}[\overline V_{t+1,k}(s')]\bigg)\\
    \label{eq:def_lambdatkpi}
    \overline \lambda_{tk}^\pi &\defeq -\lambda \Sigma_{tk}^{-1}\bigg(\int_{s'} \psi_t(s') (\overline V_{t+1,k} - V_{t+1}^\pi)(s') + \theta_t^\pi \bigg)\\
    \Delta_t^\pi(s,a) &\defeq Q_t^\pi(s,a) - \phi_t(s,a)^\top \theta_t^\pi \\
    \label{eq:def_mtkpi}
    \overline m_{tk}^\pi(s,a)
    &\defeq   \phi_t(s,a)^\top\Sigma_{tk}^{-1}\sum_{i=1}^{k-1} \phi_{ti} \bigg[\Delta_t^r(s_{ti}, a_{ti}) + \int_{s'}\Delta_t^P(s'|s_{ti}, a_{ti})\overline V_{t+1,k}(s') \bigg] + \Delta_{t}^\pi(s,a)\\&\qquad- \int_{s'} \Delta_t^P(s'|s,a)(\overline V_{t+1,k}- V_{t+1}^\pi)(s')
\end{align}

\begin{lemma}[Decomposition of unclipped Q-values]\label{rlsvi:lemQ_decomp} For $t \in [H]$ and any policy $ \pi$:
\begin{align}
    \phi_t(s,a)^\top \overline \theta_{tk} - Q_t^\pi(s,a) &=  {\color{NavyBlue}\E_{s'|s,a}[\( \overline V_{t+1,\pk} - V^\pi_{t+1} \)(s')]}  + \phi_t(s,a)^\top (
    {\color{BrickRed}\overline \eta_{tk}} +
    {\color{ForestGreen}\overline\xi_{tk}} +
    {\color{DarkOrchid}\overline \lambda_{tk}^\pi}) +
    {\color{RawSienna}\overline m_{tk}^\pi(s,a)}
\end{align}
where $\E_{s'|s,a}[\cdot] = \E_{s'\sim \mathbb{P}_t(\cdot|s,a)}[\cdot]$ and the index $ t$ will be clear from context.
\end{lemma}

\begin{proof}

By Corollary \ref{cor:linear_Q} we have:
\begin{align}
\phi_t(s,a)^\top \overline\theta_{tk}  - Q_t^\pi(s,a) &= \phi_t(s,a)^\top (\overline\theta_{tk} - \theta_{t}^\pi) + \Delta_{t}^\pi(s,a)
\end{align}

By substituting the definition of $\overline \theta_{tk}$ and the linear regression, we get:
\begin{align}
&=  \phi_t(s,a)^\top \Bigg(\overline\xi_{tk}+
\underbrace{
\Sigma_{tk}^{-1} \sum_{i=1}^{k-1} \phi_{ti} (r_{ti} + \overline V_{t+1,k}(s_{t+1,i}))
}_{= \widehat{\theta}_{tk}}
-  \theta^\pi_{t} \Bigg) + \Delta_{t}^\pi(s,a)
\end{align}
Moving $\theta_t^\pi$ inside the sum by multiplying by $ \Sigma_{tk}^{-1} \Sigma_{tk} = I$ we get
\begin{align}
    &=  \phi(s,a)^\top \(\overline\xi_{tk} + \Sigma_{tk}^{-1} \( - \lambda \theta_t^\pi + \sum_{i=1}^{k-1} \phi_{ti} \Big(r_{ti} + \overline V_{t+1,k}(s_{t+1,i}) - \phi_{ti}^\top \theta_t^\pi\Big) \)\) + \Delta_{t}^\pi(s,a).
\end{align}
Now we expand $ \phi_{ti}^\top \theta_t^\pi = \phi_{ti}^\top (\theta_t^r + \int_{s'}\psi(s')V_{t+1}^\pi(s'))$ (see Eq.~\ref{eqn:thetapi})
\begin{align}
    &=  \phi_t(s,a)^\top \(\overline\xi_{tk} + \Sigma_{tk}^{-1} \( - \lambda \theta_t^\pi + \sum_{i=1}^{k-1} \phi_{ti} \Bigg[r_{ti} + \overline V_{t+1,k}(s_{t+1,i}) - \phi_{ti}^\top \big(\theta_t^r + \int_{s'}\psi_t(s')V_{t+1}^\pi(s')\Big)\Bigg] \)\) \\
    & + \Delta_{t}^\pi(s,a).
\end{align}
Next we add and subtract $ \E_{s'|s_{ti},a_{ti}}[\overline V_{t+1,k}(s')] - \phi_{ti}^\top \int_{s'}\psi(s')\overline V_{t+1,k}(s')$ and rearrange terms to get
\begin{align}
    &=  \phi_t(s,a)^\top \bigg(\overline\xi_{tk}   - \lambda \Sigma_{tk}^{-1} \theta_t^\pi \\
    & + \Sigma_{tk}^{-1}\sum_{i=1}^{k-1} \phi_{ti} \underbrace{\bigg[r_{ti} - \phi_{ti}^\top \theta_t^r + \E_{s'|s_{ti},a_{ti}}\Big[\overline V_{t+1,k}(s')\Big] - \phi_{ti}^\top \int_{s'}\psi_t(s')\overline V_{t+1,k}(s')\bigg]}_{=\Delta_t^r(s_{ti}, a_{ti}) + \int_{s'}\Delta_t^P(s'|s_{ti}, a_{ti})\overline V_{t+1,k}(s') }\\&\qquad\qquad\qquad\qquad\qquad + \underbrace{\Sigma_{tk}^{-1} \sum_{i=1}^{k-1} \phi_{ti}\bigg[\overline V_{t+1,k}(s_{t+1,i}) - \E_{s'|s_{ti},a_{ti}}\Big[\overline V_{t+1,k}(s')\Big]\bigg]}_{\overline \eta_{tk}}\\&\qquad\qquad\qquad\qquad\qquad + \Sigma_{tk}^{-1}\sum_{i=1}^{k-1} \phi_{ti}\bigg[\phi_{ti}^\top \int_{s'}\psi_t(s')(\overline V_{t+1,k} - V_{t+1}^\pi)(s')\bigg] \bigg) + \Delta_{t}^\pi(s,a).
\end{align}
We can add and subtract a regularizer term and cancel $
\Sigma_{tk}^{-1}\Sigma_{tk}$ to get
\begin{align}
    &=  \phi_t(s,a)^\top \bigg(\overline\xi_{tk} + \overline{\eta}_{tk}  + \int_{s'}\psi_t(s')(\overline V_{t+1,k} - V_{t+1}^\pi)(s') \\&\qquad\qquad\qquad\qquad\qquad   - \underbrace{\lambda \Sigma_{tk}^{-1} \bigg[\theta_t^\pi + \int_{s'}\psi_t(s')(\overline V_{t+1,k} - V_{t+1}^\pi)(s')\bigg]}_{\overline \lambda_{tk}^\pi}\\&\qquad\qquad\qquad\qquad\qquad+ \Sigma_{tk}^{-1}\sum_{i=1}^{k-1} \phi_{ti} \bigg[\Delta_t^r(s_{ti}, a_{ti}) + \int_{s'}\Delta_t^P(s'|s_{ti}, a_{ti})\overline V_{t+1,k}(s') \bigg] \bigg) + \Delta_{t}^\pi(s,a)
\end{align}
Finally we replace the integral by the true expectation plus a misspecification term
\begin{align}
    &=  \phi_t(s,a)^\top (\overline\xi_{tk} + \overline{\eta}_{tk} + \overline{\lambda}_{tk}^\pi) + \E_{s'|s,a}[(\overline V_{t+1,k} - V_{t+1}^\pi)(s')] + \overline m_{tk}^\pi(s,a)
    \end{align}
    where
    \begin{align} \overline m_{tk}^\pi(s,a) & =  - \int_{s'} \Delta_t^P(s'|s,a)(\overline V_{t+1,k} - V_{t+1}^\pi)(s') \\
    & + \phi_t(s,a)^\top\Sigma_{tk}^{-1}\sum_{i=1}^{k-1} \phi_{ti} \bigg[\Delta_t^r(s_{ti}, a_{ti}) + \int_{s'}\Delta_t^P(s'|s_{ti}, a_{ti})\overline V_{t+1,k}(s') \bigg] + \Delta_{t}^\pi(s,a)
\end{align}
\end{proof}

\newpage
\section{Defining the Good Event}\label{rlsvi:sec:good}
In this section we formally define the filtrations that compose the history of the algorithm at any point during its runtime. Then we define the values $ \beta_k(\delta'), \nu_k(\delta'),$ and $\gamma_k(\delta')$ that are used to define our high confidence bounds. We use these to choose settings of the cutoff parameters $ \alpha_L, \alpha_U$. Finally, we define the good events whereby the terms from the decomposition presented in the preceding section are bounded in terms of the design matrix and $ \beta_k(\delta'), \nu_k(\delta'),$ and $\gamma_k(\delta')$.

\begin{definition}[Filtrations]\label{rlsvi:deffiltration}
    For any $ t \in [H]$ and any $ k$ define the filtrations
    \begin{align}
        \mathcal{H}_{tk} &\defeq \{s_{ij}, a_{ij}, r_{ij}: j \leq k,\quad i\leq t \text{ if } j=k \text{ else } i \leq H\}\\
        \mathcal{H}_{k} &\defeq \mathcal{H}_{H, k}\\
        \overline{\mathcal{H}}_{tk} &\defeq \mathcal{H}_k \ \bigcup\  \{\overline{\xi}_{ik}: i \geq t\}\\
        \overline{\mathcal{H}}_{k} &\defeq \overline{\mathcal{H}}_{1k}
    \end{align}
\end{definition}

\begin{definition}[Noise bounds]\label{rlsvi:defnoise_bounds}
    For some constants $ c_1, c_2 $ let
    \begin{align}
        \sqrt{\beta_k(\delta')} &\defeq c_1 Hd \sqrt{\log \(\frac{Hdk \max(1,L_\phi) \max(1,L_\psi) \max(1,L_r) \lambda}{\delta'}\)}\\
        \sqrt{\nu_k(\delta')} & \defeq \sqrt{\beta_k(\delta')}  + \sqrt{\lambda} L_\phi ( 3HL_\psi + L_r ) + 4 \epsilon H \sqrt{dk}\\
        \sqrt{\gamma_k(\delta')} &\defeq c_2\sqrt{dH \nu_{k}(\delta') \log(d/\delta')}
    \end{align}
\end{definition}
Note that these functions are monotonically increasing in $k$, e.g., $\sqrt{\beta_k(\delta')} \leq \sqrt{\beta_{k+1}(\delta')}$.

\begin{definition}[Default cutoff]\label{rlsvi:defdefault}
    Set
    \begin{align}
        \alpha_U &\defeq \frac{1}{4(\sqrt{\gamma_k(\delta')})} \leq \frac{1}{2(\sqrt{\nu_k(\delta')} + \sqrt{\gamma_k(\delta')})}\\
        \alpha_L &\defeq \alpha_U / 2
    \end{align}
\end{definition}

\begin{definition}[Good event]\label{rlsvi:defgood_event}
    Define
    \begin{align}
        \mathcal{G}^{\overline{\xi}}_{tk} &\defeq \bigg\{ |\phi_t(s,a)^\top \overline{\xi}_{tk}| \leq \sqrt{\gamma_k(\delta')} \|\phi_t(s,a)\|_{\Sigma_{tk}^{-1}}\bigg\} \\
        \mathcal{G}^{\overline \eta}_{tk} &\defeq \bigg\{ |\phi_t(s,a)^\top \overline \eta_{tk}| \leq \sqrt{\beta_k(\delta')} \|\phi_t(s,a)\|_{\Sigma_{tk}^{-1}}\bigg\}\\
        \mathcal{G}^{\overline \lambda}_{tk} &\defeq \bigg\{\forall \ \pi, \quad |\phi_t(s,a)^\top\overline \lambda^\pi_{tk}| \leq \sqrt{\lambda} L_\phi ( 3HL_\psi + L_r ) \|\phi_t(s,a)\|_{\Sigma_{tk}^{-1}}\bigg\}\\
        \mathcal{G}^{\overline m}_{tk} &\defeq \bigg\{\forall \ \pi, \quad |\overline m_{tk}^\pi (s,a)| \leq 4 \epsilon H (\sqrt{dk}\|\phi_t(s,a)\|_{ \Sigma_{tk}^{-1}} + 1)  \bigg\}\\
        \mathcal{G}^{\overline Q}_{tk} &\defeq \bigg\{\forall
        \ s,a,\quad |(\overline Q_{tk} - Q_t^\star)(s,a)| \leq H - t + 1\bigg\}
    \end{align}
    And then the good events are the intersections
    \begin{align}
        \overline{\mathcal{G}}_{tk} &\defeq \{  \mathcal{G}^{\overline{\xi}}_{tk} \cap \mathcal{G}^{\overline \eta}_{tk} \cap \mathcal{G}^{\overline \lambda}_{tk} \cap \mathcal{G}^{\overline m}_{tk} \cap \mathcal{G}^{\overline Q}_{tk}\}\\
        \overline{\mathcal{G}}_k &\defeq \bigcap_{ t \in [H]} \overline{\mathcal{G}}_{tk}
    \end{align}
\end{definition}

\newpage
\section{Concentration}\label{rlsvi:sec:concentration}
This section will prove that the good events happen with high probability. The tricky part is showing that the estimates $\overline{Q}_{tk}$ remain nicely bounded. To do this we bound each of the four separate terms ({\color{RawSienna}misspecification}, {\color{DarkOrchid}regularization}, {\color{ForestGreen}pseudonoise}, and {\color{BrickRed}environment noise}) with high probability when conditioned on bounded $ \overline Q$ values at time $ t+1$. Then we use an inductive argument to show that this means that all terms and the $ \overline{Q}$ values are bounded across all timesteps with high probability.

\subsection{Bounding the Misspecification Error}
\begin{lemma}[\textcolor{RawSienna}{Misspecification}]\label{rlsvi:lemmisspec_bound}
For any $ t,k,s,a $ and any policy $ \pi$, if
\begin{align}
    \bigg|(\overline Q_{t+1,k} - Q_{t+1}^\star)(s,a) \bigg| &\leq H - t
\end{align}
then
\begin{align}
    |\overline m_{tk}^\pi (s,a)| \leq 4 \epsilon H \(\sqrt{dk}\|\phi_t(s,a)\|_{\Sigma_{tk}^{-1}} + 1\)
\end{align}
\end{lemma}
\begin{proof}
Recall the definition of $\overline m_{tk}^\pi$ in Eq.~\ref{eq:def_mtkpi}
\begin{align}
    |\overline m_{tk}^\pi(s,a)| &= \bigg|\phi_t(s,a)^\top\Sigma_{tk}^{-1}\sum_{i=1}^{k-1} \phi_{ti} \bigg[\Delta_t^r(s_{ti}, a_{ti}) + \int_{s'}\Delta_t^P(s'|s_{ti}, a_{ti})\overline V_{t+1,k}(s') \bigg] + \Delta_{t}^\pi(s,a)\\&\qquad- \int_{s'} \Delta_t^P(s'|s,a)(\overline V_{t+1,k}- V_{t+1}^\pi)(s')\bigg|.
\end{align}
Under event $\mathcal{G}_{t+1.k}^{\overline{Q}}$, we have that $ |(\overline V_{t+1,k}- V_{t+1}^\pi)(s')| \leq |(\overline V_{t+1,k}- V_{t+1}^\star)(s')| + |(V_{t+1}^\star- V_{t+1}^\pi)(s')| \leq 2H $.
Then, applying the triangle inequality, Holder, and bounds from Definition \ref{rlsvi:defmisspec} and Corollary \ref{cor:linear_Q} as well as previous bound on the estimated value functions, we can erite
\begin{align}
    |\overline m^\pi_{tk}(s,a)|\leq (\epsilon + \epsilon H)\bigg|\phi_t(s,a)^\top\Sigma_{tk}^{-1}\sum_{i=1}^{k-1} \phi_{ti}\bigg| + \epsilon H + 3\epsilon H.
\end{align}
Finally, grouping terms and applying Cauchy-Schwarz twice we get
\begin{align}
    & \leq 4\epsilon H \bigg( \|\phi_t(s,a)\|_{\Sigma_{tk}^{-1}} \bigg\|\sum_{i=1}^{k-1} \phi_{ti} \bigg\|_{\Sigma_{tk}^{-1}} + 1\bigg)\\
    &\leq 4\epsilon H \bigg( \sqrt{k}\|\phi_t(s,a)\|_{\Sigma_{tk}^{-1}} \bigg(\sum_{i=1}^{k-1} \|\phi_{ti} \|_{\Sigma_{tk}^{-1}}^2\bigg)^{1/2} + 1\bigg).
\end{align}
The result follows by applying Lemma \ref{rlsvi:lemfeature_sum_final}.
\end{proof}

\subsection{Bounding the Regularization}

\begin{lemma}[\textcolor{DarkOrchid}{Regularization}]\label{rlsvi:lemregularization}
For any $ t,k, \pi$ and any features $ \phi_t(s,a)$, if
\begin{align}
    \bigg|(\overline Q_{t+1,k} - Q_{t+1}^\star)(s,a) \bigg| &\leq H - t
\end{align}
then
\begin{align}
    |\phi_t(s,a)^\top \overline \lambda_{tk}^\pi| \leq \sqrt{\lambda} L_\phi ( 3HL_\psi + L_r) \|\phi_t(s,a)\|_{\Sigma_{tk}^{-1}}
\end{align}
\end{lemma}
\begin{proof}
By Cauchy-Schwarz and the fact that the maximal eigenvalue of $ \Sigma_{tk}^{-1}$ is at most $ 1/\lambda$
\begin{align}
|\phi_t(s,a)^\top \overline \lambda_{tk}^\pi| &= \bigg| \phi_t(s,a)^\top \lambda \Sigma_{tk}^{-1}\bigg(\int_{s'} \psi_t(s') (\overline V_{t+1,k} - V_{t+1}^\pi)(s') + \theta_t^\pi \bigg)\bigg|\\
&\leq \sqrt{\lambda} \|\phi_t(s,a)\|_{\Sigma_{tk}^{-1}} \(\bigg\|\int_{s'} \psi_t(s') (\overline V_{t+1,k} - V_{t+1}^\pi)(s')\bigg\| + \| \theta_t^\pi\|\)\\
&\leq \sqrt{\lambda} \|\phi_t(s,a)\|_{\Sigma_{tk}^{-1}} \(\int_{s'} \bigg\|\psi_t(s') (\overline V_{t+1,k} - V_{t+1}^\pi)(s')\bigg\| + \| \theta_t^\pi\|\)\\
&\leq \sqrt{\lambda} \|\phi_t(s,a)\|_{\Sigma_{tk}^{-1}} \(\int_{s'} \|\psi_t(s') \| |(\overline V_{t+1,k} - V_{t+1}^\pi)(s')| + \| \theta_t^\pi\|\)\\
&\leq \sqrt{\lambda} \|\phi_t(s,a)\|_{\Sigma_{tk}^{-1}} \( \|\overline V_{t+1,k} - V_{t+1}^\pi\|_\infty\int_{s'} \|\psi_t(s') \|  + \| \theta_t^\pi\|\)
\end{align}
Applying the hypothesis of the lemma and the bounds from Assumption \ref{rlsvi:assapprox_lowrank} and Corollary \ref{cor:linear_Q}
\begin{align}
    \leq \sqrt{\lambda} L_\phi  [2H L_\psi + (L_r + (H-t) L_\psi))]\|\phi_t(s,a)\|_{\Sigma_{tk}^{-1}} \leq \sqrt{\lambda} L_\phi ( 3HL_\psi + L_r )\|\phi_t(s,a)\|_{\Sigma_{tk}^{-1}}.
\end{align}
\end{proof}

\subsection{Bounding the Environment Noise}

\begin{lemma}[Concentration inductive step]\label{rlsvi:lemconcentration_induction}
Fix $t$ and $k$. For any $ \delta' > 0$ and conditioned for all $ s,a$ and all $ z > t$ on
%the history strictly before episode $k $ and the pseudonoise $ \overline \xi_{t+1, k}, \dots, \overline{ \xi_{Hk}}$ \warn{we should define this filtration more formally} and assuming that for all $ s,a$
\begin{align}
\label{eqn:lemci_h1}
    \bigg|(\overline Q_{z,k} - Q_{z}^\star)(s,a) \bigg| &\leq H - t
\end{align}
and on
\begin{align}
    \|\overline\xi_{t+1,k}\|_{\Sigma_{t+1,k}} &\leq   \sqrt{\gamma_k(\delta')}
\end{align}
then with probability at least $ 1-\delta'$
\begin{align}
    | \phi_t(s,a)^\top \overline \eta_{tk} |  &\leq  \sqrt{\beta_k(\delta')} \| \phi_t(s,a) \|_{\Sigma_{tk}^{-1}}
\end{align}
\end{lemma}
\begin{proof}

Recall the definition of $\overline{\eta}_{tk}$ given in Eq.~\ref{eq:def_etatk}.
By Cauchy-Schwarz:
\begin{align}
    | \phi_t(s,a)^\top \overline \eta_{tk}
    | \leq \|\phi_t(s,a) \|_{\Sigma_{tk}^{-1}} \|\overline \eta_{tk}\|_{\Sigma_{tk}^{-1}}
\end{align}
where
\begin{align}
    \|\overline \eta_{tk}\|_{\Sigma_{tk}^{-1}} = \bigg\| \sum_{i=1}^{k-1} \phi_{ti}\bigg( \overline V_{t+1,k}(s_{t+1,i}) - \E_{s'|s_{ti}, a_{ti}}[\overline V_{t+1,k}(s')]\bigg) \bigg\|_{\Sigma_{tk}^{-1}}
\end{align}

First, we will show that given the hypothesis of the lemma, we can bound
\begin{align}
    \| \overline \theta_{t+1,k} \| \leq 2 H \sqrt{kd / \lambda} + \sqrt{\gamma_k(\delta')/\lambda}
\end{align}
To see this, note that $ \|\overline V_{t+2,k} \|_\infty \leq 2(H-t-1)$ from Eq.~\ref{eqn:lemci_h1} and so applying Cauchy-Schwarz gives us
\begin{align}
    \| \widehat \theta_{t+1,k} \| &= \|\Sigma_{t+1,k}^{-1} \sum_{i=1}^{k-1} \phi_{ti} (r_{t+1,i} + \overline V_{t+2,k}(s_{t+2,i}))\| \leq \|\Sigma_{t+1,k}^{-1/2}\| \|\sum_{i=1}^{k-1} \phi_{t+1,i} (r_{t+1,i} + \overline V_{t+2,k}(s_{t+2,i}))\|_{\Sigma_{t+1,k}^{-1}}\\
    &\leq \frac{1}{\sqrt{\lambda}} \sqrt{k} \bigg(\sum_{i=1}^{k-1} \|\phi_{t+1,i} (r_{t+1,i} + \overline V_{t+2,k}(s_{t+2,i}))\|^2_{\Sigma_{t+1,k}^{-1}}\bigg)^{1/2}\\
    &\leq \frac{1}{\sqrt{\lambda}} (2(H-t -1 ) + 1) \sqrt{k} \bigg(\sum_{i=1}^{k-1} \|\phi_{ti} \|^2_{\Sigma_{tk}^{-1}}\bigg)^{1/2}\\
    &\leq 2 H \sqrt{kd / \lambda}
\end{align}
where the last inequality comes from Lemma \ref{rlsvi:lemfeature_sum_final}. With this bound in hand, we can now proceed with a covering argument over the functions $ \overline V_{t+1,k}$ to bound $ \overline \eta_{tk}$.

For any $ \theta \in \R^d $ with $ \|\theta \| \leq 2 H \sqrt{kd / \lambda}  + \sqrt{\gamma_k(\delta')/\lambda}$ and $ \Sigma \in \R^{d\times d}$ symmetric and positive definite with $ \|\Sigma \| \leq \frac{1}{\lambda}$, we define
\begin{align} \label{eqn:defQthetasigma}
    Q_t^{\theta, \Sigma}(s,a) \defeq \begin{cases}
      \phi_t(s,a)^\top \theta, & \text{if}\ \| \phi_t(s,a)\|_{\Sigma} \leq \alpha_L\\
       H-t + 1 &  \text{if}\ \| \phi_t(s,a)\|_{\Sigma} \geq \alpha_U \\
      \(\frac{\alpha_U - \|\phi_t(s,a)\|_\Sigma }{\alpha_U - \alpha_L}\)\phi_t(s,a)^\top \theta + \(\frac{ \|\phi_t(s,a)\|_\Sigma - \alpha_L}{\alpha_U - \alpha_L}\)(H-t + 1) & \text{otherwise}
    \end{cases}
\end{align}

Let $ V^{\theta, \Sigma} $ be the corresponding value function. Note that $ \overline V_{t+1,k} = V^{\overline \theta_{t+1,k}, \Sigma_{t+1, k}^{-1}}$.

Define
\begin{align}\label{eqn:defotp1}
    O_{t+1} \defeq \bigg\{\theta, \Sigma: \|\theta \| \leq 2 H \sqrt{kd / \lambda} + \sqrt{\gamma_k(\delta')/\lambda},\quad \|\Sigma \| \leq \frac{1}{\lambda}, \\ \quad |(Q_{t+1}^{\theta, \Sigma} - Q_{t+1}^\star)(s,a)| \leq H - t \ \ \forall\ s,a\bigg\}
\end{align}
So that by the hypothesis of the lemma, $\overline \theta_{t+1,k}, \Sigma_{t+1, k}^{-1} \in O_{t+1}$.

For any $ (\theta, \Sigma) \in O_{t+1}$ and $ i \in [k-1]$ define
\begin{align}
    x^{\theta, \Sigma}_i \defeq V^{\theta, \Sigma} (s_{t+1,i}) - \E_{s'|s_{ti}, a_{ti}}[V^{\theta, \Sigma} (s')]
\end{align}
Then $ x_i $ defines a martingale difference sequence with filtration $ \mathcal{H}_{ti}$. Moreover, by the definition of $ O_{t+1}$, each $ x_i$ is bounded in absolute value by $ 2H$ (from last condition in~\eqref{eqn:defotp1}) so that each $ x_i$ is a $ 2H$-subgaussian random variable.

So, by Lemma \ref{rlsvi:lemself_normalized} the $ x^{\theta, \Sigma}_i$ induce a self normalizing process so that
\begin{align}
    \bigg\| \sum_{i=1}^{k-1} \phi_i x^{\theta, \Sigma}_i \bigg\|_{\Sigma_{tk}^{-1}}  \leq 4H\bigg(d\log\(\frac{k L_\phi^2 + \lambda}{\lambda}\) + \log(1/\delta')\bigg)^{1/2}
\end{align}

Note that the $\varepsilon$-covering number of $ O_{t+1}$ as a Euclidean ball in $ \R^{d+d^2}$ of radius $ 2H\sqrt{kd/\lambda} + \sqrt{\gamma_k(\delta')/\lambda} + 1/\lambda$, denoted $N_\varepsilon(O_{t+1})$, is bounded by Lemma \ref{rlsvi:lemcovering_number} as $ (3 (2H\sqrt{kd/\lambda} + \sqrt{\gamma_k(\delta')/\lambda} +  1/\lambda)/\varepsilon)^{d^2 + d}$.
So, by a union bound, with probability at least $ 1- \delta' $ we have for all $ (\theta, \Sigma) \in O_{t+1}$ that
\begin{align}
    \bigg\| \sum_{i=1}^{k-1} \phi_i x^{\theta, \Sigma}_i \bigg\|_{\Sigma_{tk}^{-1}}  &\leq 4H\bigg(d\log\(\frac{k L_\phi^2 + \lambda}{\lambda}\) + \log(N_\varepsilon(O_{t+1}) /\delta')\bigg)^{1/2} \\&\leq  4H\bigg(d\log\(\frac{k L_\phi^2 + \lambda}{\lambda}\) \\ & + (d^2 + d) \log \bigg(3(2H\sqrt{kd/\lambda} + \sqrt{\gamma_k(\delta')/\lambda}+ 1/\lambda)/\varepsilon\bigg)+ \log(1 /\delta')\bigg)^{1/2}\\\label{eqn:env_noise_bound_long}
    &\leq 8Hd \bigg(\log\(\frac{k L_\phi^2 + \lambda}{\lambda}\) \\
    & + \log \bigg(3 (2H\sqrt{kd/\lambda} + \sqrt{\gamma_k(\delta')/\lambda}+ 1/\lambda)/\varepsilon\bigg) + \log(1 /\delta')\bigg)^{1/2}
\end{align}
%\warn{check this. Simplify the bound??} \critical{Using this to define $ \beta$ is circular. Use an upper bound on $ \gamma$}

To conclude the proof, we choose a specific $ (\theta, \Sigma)  \in O_{t+1}$ such that $ \|\theta - \overline \theta_{t+1,k}\| \leq \varepsilon$ and $ \|\Sigma - \Sigma_{t+1, k}^{-1}\|_F \leq \varepsilon$. Then
\begin{align}
     \|\eta_{tk}\|_{\Sigma_{tk}^{-1}}&= \bigg\| \sum_{i=1}^{k-1} \phi_i x^{\overline \theta_{t+1,k}, \Sigma_{t+1, k}^{-1}}_i \bigg\|_{\Sigma_{tk}^{-1}} \\ \label{eqn:env_noise_decomp}
     &\leq \bigg\| \sum_{i=1}^{k-1} \phi_i x^{\theta, \Sigma}_i \bigg\|_{\Sigma_{tk}^{-1}} + \bigg\| \sum_{i=1}^{k-1} \phi_i (x^{\theta, \Sigma}_i - x^{\overline \theta_{t+1,k}, \Sigma_{t+1, k}^{-1}}_i ) \bigg\|_{\Sigma_{tk}^{-1}}
\end{align}
Then we can bound
\begin{align}
     \bigg\| \sum_{i=1}^{k-1} \phi_i (x^{\theta, \Sigma}_i - x^{\overline \theta_{t+1,k}, \Sigma_{t+1, k}^{-1}}_i ) \bigg\|_{\Sigma_{tk}^{-1}} \leq kL_\phi \sup_i \bigg|x^{\theta, \Sigma}_i - x^{\overline \theta_{t+1,k}, \Sigma_{t+1, k}^{-1}}_i \bigg|
\end{align}

Plugging in the definition of the $ x_i$ and applying Lemma \ref{rlsvi:lemcovering_lemma} we bound
\begin{align}
    \sup_i \bigg|x^{\theta, \Sigma}_i - x^{\overline \theta_{t+1,k}, \Sigma_{t+1, k}^{-1}}_i \bigg| &= \sup_i \bigg|(V^{\theta, \Sigma} - \overline V_{t+1,i}) (s_{t+1,i}) - \E_{s'|s_{ti}, a_{ti}}[(V^{\theta, \Sigma} - \overline V_{t+1,i})(s')] \bigg|\\
    &\leq 2 \sup_{s,a} |(Q^{\theta, \Sigma} - \overline Q_{t+1,k})(s,a)|\\
    &\leq 2 \sqrt{\varepsilon}\frac{ L_\phi (4H^2)}{\alpha_U - \alpha_L}
\end{align}

So we can bound the covering error by 1 if we choose $\varepsilon$ small enough such that
\begin{align}
    \varepsilon \leq \(\frac{\alpha_U - \alpha_L}{8kL_\phi^2 H^2}\)^2
\end{align}

Then with probability at least $ 1-\delta$, combining (\ref{eqn:env_noise_bound_long}) with (\ref{eqn:env_noise_decomp}),  \ref{rlsvi:lemlog_term}, and the choice of $ \varepsilon$ we get
\begin{align}
    \|\overline\eta_{tk}\|_{\Sigma_{tk}^{-1}}
     &\leq \bigg\| \sum_{i=1}^{k-1} \phi_i x^{\theta, \Sigma}_i \bigg\|_{\Sigma_{tk}^{-1}} + 1 \leq \sqrt{\beta_k(\delta')}
\end{align}
as desired.
\end{proof}

\begin{lemma}[Covering Lemma]\label{rlsvi:lemcovering_lemma}
This lemma uses the notation defined within the previous lemma, suppressing indices. Take $ (\theta, \Sigma)$ and $(\theta', \Sigma')$ in $ O $ (see Eq.~\ref{eqn:defotp1} for generic $ t$) such that $\|\theta - \theta'\| \leq \varepsilon$ and $ \|\Sigma - \Sigma'\| \leq \varepsilon$ with $ \varepsilon\leq \min \{1, \frac{H}{3L_\phi}, \frac{\alpha_U - \alpha_L}{L_\phi^2}\}$, then
\begin{align}
    \sup_{s,a} |(Q^{\theta, \Sigma} - Q^{\theta', \Sigma'})(s,a)|  \leq \sqrt{\varepsilon}\frac{ L_\phi (4H^2)}{\alpha_U - \alpha_L}
\end{align}
\end{lemma}
\begin{proof}
Note that by the assumption, for any $ \phi$ with $ \|\phi\|\leq L_\phi$
\begin{align}\label{eqn:matrix_net}
    |\|\phi\|_\Sigma - \|\phi\|_{\Sigma'}| &= \bigg|\sqrt{\phi^\top \Sigma \phi} - \sqrt{\phi^\top \Sigma' \phi}\bigg| \leq \sqrt{|\phi^\top (\Sigma - \Sigma')\phi|} \leq \sqrt{\|\phi\| \|(\Sigma - \Sigma')\|\|\phi\|} \leq \sqrt{\varepsilon} L_\phi
\end{align}

Now we need to split into cases. Since $ \theta, \Sigma $ and $ \theta', \Sigma'$ are interchangable, the following 5 cases cover all possibilities.

\noindent\textbf{Case 1 (linear-linear):} $ \|\phi(s,a)\|_{\Sigma} \leq \alpha_L $ and $ \|\phi(s,a)\|_{\Sigma'}\leq \alpha_L$.

We can apply Cauchy-Schwarz and the definition of the case to get
\begin{align}\label{eqn:linearlinear}
    |(Q^{\theta, \Sigma} - Q^{\theta', \Sigma'})(s,a)| = |\phi(s,a)^\top (\theta - \theta')| \leq L_\phi \varepsilon
\end{align}

\noindent\textbf{Case 2 (linear-interpolating):} $ \|\phi(s,a)\|_{\Sigma} \leq \alpha_L $ and $ \alpha_L  \leq  \|\phi(s,a)\|_{\Sigma'} \leq \alpha_L + \sqrt{\varepsilon} L_\phi  \leq \alpha_U $

Applying (\ref{eqn:matrix_net}) and the definition of the case,
\begin{align}
    \|\phi(s,a)\|_{\Sigma'}\leq\|\phi(s,a)\|_{\Sigma} + |\|\phi(s,a)\|_{\Sigma'} - \|\phi(s,a)\|_{\Sigma}| \leq \alpha_L + \sqrt{\varepsilon} L_\phi.
\end{align}
Moreover, by our choice of $ \theta, \Sigma \in O$ which induces bounded Q functions we can bound
\begin{align}\label{eqn:cover_bounded_difference}
    |\phi(s,a)^\top \theta - (H-t) | \leq |\phi(s,a)^\top \theta| + H \leq 3H
\end{align}

So if we set
\begin{align}
    q' \defeq \frac{\|\phi(s,a)\|_{\Sigma'} - \alpha_L}{\alpha_U - \alpha_L} \leq \frac{\alpha_L + \sqrt{\varepsilon} L_\phi - \alpha_L}{\alpha_U - \alpha_L} = \frac{\sqrt{\varepsilon} L_\phi }{\alpha_U - \alpha_L}
\end{align}
then we have by the triangle inequality, equation (\ref{eqn:linearlinear}), and the above reasoning,
\begin{align}
    |(Q^{\theta, \Sigma} - Q^{\theta', \Sigma'})(s,a)| &= |\phi(s,a)^\top \theta - (1-q') \phi(s,a)^\top \theta' -  q'(H-t)|\\
     &\leq (1-q')  |\phi(s,a)^\top (\theta -\theta')|  +  q'| \phi(s,a)^\top \theta - (H-t)|\\\
    &\leq (1-q') L_\phi \varepsilon + q' |\phi(s,a)^\top \theta - (H-t) |\\
    &\leq L_\phi \varepsilon + \frac{\sqrt{\varepsilon} L_\phi (3H)}{\alpha_U - \alpha_L}
\end{align}

\noindent\textbf{Case 3 (default-default):} $ \alpha_U \leq \|\phi(s,a)\|_{\Sigma} $ and  $ \alpha_U  \leq \|\phi(s,a)\|_{\Sigma'}$.

Then we have that
\begin{align}
    |(Q^{\theta, \Sigma} - Q^{\theta', \Sigma'})(s,a)| = |(H-t) - (H-t)| = 0.
\end{align}

\noindent\textbf{Case 4 (default-interpolating):} $ \alpha_U \leq \|\phi(s,a)\|_{\Sigma} $ and $ \alpha_L  \leq \alpha_U - \sqrt{\varepsilon} L_\phi \leq  \|\phi(s,a)\|_{\Sigma'}  \leq \alpha_U $

By the definition of the case
\begin{align}
      -\sqrt{\varepsilon} L_\phi \leq  \|\phi(s,a)\|_{\Sigma'} - \alpha_U,
\end{align}
so that defining $ q' $ as before
\begin{align}\label{eqn:qprimebound}
    1-q' = 1 - \frac{\|\phi(s,a)\|_{\Sigma'} - \alpha_L}{\alpha_U - \alpha_L} = \frac{\alpha_U - \|\phi(s,a)\|_{\Sigma'} }{\alpha_U - \alpha_L} \leq  \frac{\sqrt{\varepsilon} L_\phi}{\alpha_U - \alpha_L}.
\end{align}
And thus, applying (\ref{eqn:qprimebound}) and (\ref{eqn:cover_bounded_difference}) again we get
\begin{align}
    |(Q^{\theta, \Sigma} - Q^{\theta', \Sigma'})(s,a)| &= |(H-t) - (1-q')\phi(s,a)^\top \theta' - q' (H-t)|\\
    &\leq (1-q') |\phi(s,a)^\top \theta' -  (H-t)|\\
    &\leq \frac{\sqrt{\varepsilon} L_\phi(3H)}{\alpha_U - \alpha_L}
\end{align}

\noindent\textbf{Case 5 (interpolating-interpolating):} $\alpha_L  \leq  \|\phi(s,a)\|_{\Sigma'}  \leq \alpha_U$ and $ \alpha_L  \leq  \|\phi(s,a)\|_{\Sigma'}  \leq \alpha_U $

Letting $ q $ be analogous to $ q'$ but for $ \Sigma$ and applying (\ref{eqn:matrix_net}) we have
\begin{align}
    |q - q'| = \frac{|\|\phi(s,a)\|_{\Sigma} - \alpha_L - (\|\phi(s,a)\|_{\Sigma'} - \alpha_L)|}{\alpha_U - \alpha_L} \leq \frac{\sqrt{\varepsilon}L_\phi}{\alpha_U - \alpha_L}
\end{align}
Thus we have that
\begin{align}
    |(Q^{\theta, \Sigma} - Q^{\theta', \Sigma'})(s,a)| &= | (1-q)\phi(s,a)^\top \theta + q (H-t)  \\&\qquad\qquad- (1-q')\phi(s,a)^\top \theta' - q' (H-t)|\\
    &\leq \frac{\sqrt{\varepsilon} L_\phi(3H)}{\alpha_U - \alpha_L} \(|\phi(s,a)^\top (\theta - \theta')| + H\) \\
    &\leq \frac{\sqrt{\varepsilon} L_\phi(3H)}{\alpha_U - \alpha_L} \(L_\phi \varepsilon  + H\) \leq \frac{\sqrt{\varepsilon} L_\phi(4H^2)}{\alpha_U - \alpha_L}
\end{align}
Taking the max over all of the cases (which is case 5) yields the result.
\end{proof}

\subsection{Bounding the Q values}

\begin{lemma}[Boundedness inductive step]\label{rlsvi:lembounded_induction}
Assume that $ \epsilon < \frac{1}{10H}$ and that for all $ s,a$
\begin{align}
    \bigg|(\overline Q_{t+1,k} - Q_{t+1}^\star)(s,a) \bigg| &\leq H - t\\
    \label{eqn:bounded_induction_inductive_property}
    \bigg| \phi_t(s,a)^\top (\overline \eta_{tk} + \overline \xi_{tk} + \overline \lambda_{tk}^\star) + \overline m_{tk}^\star(s,a)
    \bigg|  &\leq  (\sqrt{\nu_k(\delta')} + \sqrt{\gamma_k(\delta')})\| \phi_t(s,a) \|_{\Sigma_{tk}^{-1}} + 4 \epsilon H
\end{align}
where $\overline \lambda_{tk}^\star$ and $\overline m_{tk}^\star$ are as in~\eqref{eq:def_lambdatkpi} and~\eqref{eq:def_mtkpi} with $\pi=\pi^\star$, then for all $ s,a $
\begin{align}
    \bigg|(\overline Q_{tk} - Q_{t}^\star)(s,a) \bigg| &\leq H - t + 1
\end{align}
\end{lemma}

\begin{proof}
There are two cases, depending on whether the features are large.

\noindent\textbf{Case 1 (large features):} $ \|\phi_t(s,a) \|_{\Sigma_{tk}^{-1}} \geq \alpha_U$.

Then by the definition of $ \overline{Q}_{tk}$ from the algorithm (see~\eqref{eqn:defQthetasigma} or~Definition \ref{rlsvi:defQdef}), we have $ 0 \leq \overline{Q}_{tk}(s,a) \leq H - t + 1$. Since $ Q_t^\star$ must be in the same range, we immediately get
\begin{align}
    \bigg|(\overline Q_{tk} - Q_{t}^\star)(s,a) \bigg| &\leq H - t + 1
\end{align}

\noindent\textbf{Case 2 (small features):} $\|\phi_t(s,a) \|_{\Sigma_{tk}^{-1}} \leq \alpha_L$.

In this case we get $ \overline Q_{tk}(s,a) = \phi_t(s,a)^\top \overline{\theta}_{tk}$. So we apply Lemma \ref{rlsvi:lemQ_decomp} to get
\begin{align}
    \bigg|(\overline Q_{tk} - Q_{t}^\star)(s,a) \bigg| &= \bigg| \E_{s'|s,a}[\( \overline V_{t+1,\pk} - V^\star_{t+1} \)(s')]  + \phi_t(s,a)^\top (\overline \eta_{tk} + \overline\xi_{tk} + \overline \lambda_{tk}^\star) + \overline m_{tk}^\star(s,a) \bigg|.
\end{align}
We can split the terms by the triangle inequality. Using the inductive hypothesis~\eqref{eqn:bounded_induction_inductive_property} gives us
\begin{align}
    &\leq H - t + (\sqrt{\nu_k(\delta')} + \sqrt{\gamma_k(\delta')})\underbrace{\| \phi_t(s,a) \|_{\Sigma_{tk}^{-1}}}_{\leq \alpha_L \text{ since Case 2 holds}} + 4\epsilon H.
\end{align}
% Using the definition of Case 2 we can bound this by
% \begin{align}
%     \leq H - t + (\sqrt{\nu_k} + \sqrt{\gamma_k})\alpha_L + 4\epsilon H
% \end{align}
Finally, by our choice of $ \alpha_L $ (see Definition~\ref{rlsvi:defdefault})  and using $ \epsilon < \frac{1}{10H}$ we get the final bound
\begin{align}
    \leq H - t + 1.
\end{align}

\noindent\textbf{Case 3 (medium features):} $\alpha_L  \leq \|\phi_t(s,a) \|_{\Sigma_{tk}^{-1}} \leq \alpha_U $.

This case immediately follows from applying the first two cases and our choice of $ \alpha_U$ (see Definition~\ref{rlsvi:defdefault}) along with noting that for any $ Q^1, Q^2$
\begin{align}
    |q Q^1(s,a) + (1-q) Q^2(s,a)  - Q_t^\star(s,a)| \leq  q| (Q^1 - Q_t^\star)(s,a)|  + (1-q)| (Q^2- Q_t^\star)(s,a)|
\end{align}
So that when both $ Q^1, Q^2 $ satisfy the desired relationship to $ Q^\star$, so does their interpolation.
\end{proof}

\subsection{Putting it All Together: Good Event with High Probability}\label{rlsvi:sec:good_event}

\begin{lemma}[Good event probability]\label{rlsvi:lemgood_prob}
For $ \epsilon < \frac{1}{10H}$, with probability at least $ 1-\delta/8$ we have $ \bigcap_{k \leq K} \overline{\mathcal{G}}_k$.
\end{lemma}

\begin{proof}
For each $ k $ we will induct backwards over $ t$ using the preceding lemmas to prove that $ \overline{\mathcal{G}}_{tk}$ occurs for all $ t \in [H]$ with probability at least $ 1 - \delta/(8K)$. In the following, recall that $ \delta' = \delta / (16HK)$.

As the base case, consider step $ H$.  Since we define $ \overline Q_{H+1, k} = 0 = Q_{H+1}^\star$, we can invoke Lemmas \ref{rlsvi:lemmisspec_bound} and \ref{rlsvi:lemregularization} to get $ \mathcal{G}^{\overline \lambda}_{Hk}$ and $ \mathcal{G}^{\overline m}_{Hk}$. Then we can apply Lemma \ref{rlsvi:lemgaussian_concentration} so that  and $ \mathcal{G}^{\overline\xi}_{Hk}$ occurs with probability $ 1- \delta'$. Then we can invoke Lemma \ref{rlsvi:lemconcentration_induction} to get that conditioned on all these other events we get $ \mathcal{G}^{\overline \eta}_{Hk}$ with probability at least $ 1- \delta'$. Thus, we get the intersection of these events $ \{  \mathcal{G}^{\overline{\xi}}_{Hk} \cap \mathcal{G}^{\overline \eta}_{Hk} \cap \mathcal{G}^{\overline \lambda}_{Hk} \cap \mathcal{G}^{\overline m}_{Hk}  \}$ with probability at least $ (1-\delta')^2$. Finally, conditioned on $ \{ \mathcal{G}^{\overline Q}_{H+1,k} \cap \mathcal{G}^{\overline{\xi}}_{Hk} \cap \mathcal{G}^{\overline \eta}_{Hk} \cap \mathcal{G}^{\overline \lambda}_{Hk} \cap \mathcal{G}^{\overline m}_{Hk}  \}$ we can invoke Lemma \ref{rlsvi:lembounded_induction} (using the condition on $ \epsilon$) to get $\mathcal{G}^{\overline Q}_{Hk}$. Combining, we see that $ P(\overline{\mathcal{G}}_{Hk}) \geq (1 - \delta')^2$. The inductive step follows the same outline so that conditioning on $ \overline{\mathcal{G}}_{tk}$ we have $ P(\overline{\mathcal{G}}_{t-1,k}| \overline{\mathcal{G}}_{tk}) \geq (1-\delta')^2$. Thus, we can bound
\begin{align}
    P\( \overline{\mathcal{G}}_{k}\) \geq (1 - \delta')^{2H} \geq 1 - 2H \delta' = 1 - \delta/ (8K)
\end{align}
A union bound over $ k \in [K]$ gives the result.
\end{proof}

\newpage
\section{Optimism}\label{rlsvi:sec:optimism}

In this section we discuss how Algorithm \ref{alg:RLSVI} can ensure optimism, and we use $\widetilde \xi$ instead of $\overline\xi$ to indicate the pseudonoise. This facilitates the proof of Lemma \ref{rlsvi:lemPessimism} later, but \emph{the reader should think of the  $\widetilde \xi$'s as independent and identically distributed copies of the $\overline \xi$'s} (therefore with the same `properties').

To discuss optimism, in Lemma \ref{rlsvi:lemOptimisticRecursion} we lower bound the value function difference by a one-dimensional random walk. The idea is to look at the probability that the algorithm is optimistic along the optimal policy $\pistar$. If condition $\Smallcondition{tk}{\phi_t(x_t,\pistar_t(x_t))}$  was true at every $x_t$ encountered upon following the optimal policy $\pistar$, the random variable in the random walk that we obtain would be the projection of the pseudonoise $\overline \xi$ along the average feature $\phi$ encountered upon following $\pistar$. In fact, since $\Smallcondition{tk}{\phi_t(x_t,\pistar_t(x_t))}$ does not always hold, we end up not projecting on the average $\phi_t$ but on a different $\phi^\star_t$; importantly, this $\phi^\star_t$ is a \emph{non-random} quantity when conditioned on the history $\H_k$ and starting state $s_{1k}$. This allows us to show optimism by looking at properties of a normal random walk in lemma \ref{rlsvi:lemOptimisticRecursion}.

\begin{lem}[Optimistic Recursion]
Condition on the starting state $s_{1k}$, the history $\widetilde \H_k$ (which is $ \overline\H_k $ with $ \widetilde \xi_{tk}$ in place of $ \overline\xi_{tk}$), and the good event $ \widetilde{\mathcal{G}}_k$ (again with $ \widetilde \xi_{tk}$ in place of $ \overline\xi_{tk}$). Then for every timestep $t \in [H]$ there exists vector $\phi^\star_t \in \R^d$ that does not depend on any $ \widetilde\xi_{tk} $ such that:
\label{rlsvi:lemOptimisticRecursion}
\begin{align}
\( \widetilde V_{1} - \Vstar_{1}\)(s_{1k})  & \geq  \sum_{t=1}^H [(\phi^\star_t)^\top \widetilde \xi_{t\pk} - \sqrt{\nu_{ k}(\delta')}\|\phi_t^\star \|_{\Sigma_{t k}^{-1}}] - 4 H^2\epsilon .
\end{align}
\end{lem}

\begin{proof}
The proof proceed by induction, and is split into sections.

We will use $ x $ rather than $ s $ to emphasize the difference between states sampled with $ \pi^\star$ (denoted by $ x$) from those sampled with our policy $ \pi_k$ (denoted by $ s$).
Before to proceed, recall the definition of $\widetilde{Q}$ (same as $\overline{Q}$) from~\eqref{eqn:defQthetasigma} or~\eqref{eqn:defqbar}.

\paragraph{Definitions.}
Recursively define the following functions $w_t : \StateSpace \rightarrow \R $ and $\mathring w_t : \StateSpace \rightarrow \R $, which will be used to define $\phi^\star_t$:
\begin{align}
\label{eqn:defwtp1xtp1}
w_{t+1}(x_{t+1}) & = \int_{\StateSpace}\mathring w_t(x_t)\mathbb{P}_t(x_{t+1}|x_t,\pistar_t(x_t))dx_t \\
\mathring w_t(x_t) & = \begin{cases}
      w_t(x_t), & \text{if}\ \Smallcondition{tk}{\phi_{t}(s_t,\pistar_t(s_t))} \\
     \frac{\Condition{tk}{\phi_{t}(x_t,\pistar_t(x_t))} - \alpha_L}{\alpha_U-\alpha_L} w_t(x_t) , & \text{if }\Mediumcondition{tk}{\phi_{t}(x_t,\pistar_t(x_t))} \\
     0, & \text{if}\ \Largecondition{tk}{\phi_{t}(x_t,\pistar_t(x_t))}
    \end{cases} \\
w_1(x_1) & = 1 \\
x_1 &= s_{1k}
\label{eqn:wdef}
\end{align}

Then we can define
\begin{align}
    \phi^\star_t \defeq \int_{\StateSpace} \mathring w_{t}(x_{t}) \phi_t(x_t,\pistar_{t}(x_t)) dx_t.
\end{align}
Importantly, this choice of $ \phi_t^\star$ has no dependence on any $ \widetilde\xi_{tk}$ with $ t \in [H]$.

First we prove by induction that the $w_t$'s are positive and integrate to less than 1 for all $t \in [H]$:
\begin{align*}
w_t(x_t) & \geq 0, \; \forall x_t \in \StateSpace \\
\int_{S} w_t(x_t)dx_t & \leq 1
\numberthis{\label{eqn:wprop}}
\end{align*}
Positivity is immediate from the definition of equation (\ref{eqn:wdef}) since all quantities are positive. For the integral, assume by induction that at step $t$ it holds that $\int_{S} w_t(x_t)dx_t \leq 1$. For $t+1$ we have:

\begin{align}
\int_{\StateSpace} w_{t+1}(x_{t+1}) dx_{t+1} & = \int_{\StateSpace} \( \int_{\StateSpace}\mathring w_t(x_t)\mathbb{P}_t(x_{t+1}|x_t,\pistar_t(x_t))dx_t\)dx_{t+1} \\
& \stackrel{F}{=}\int_{\StateSpace} \( \int_{\StateSpace}\mathring w_t(x_t)\mathbb{P}_t(x_{t+1}|x_t,\pistar_t(x_t))dx_{t+1}\)dx_{t} \\
& = \int_{\StateSpace} \mathring w_t(x_t) \underbrace{\( \int_{\StateSpace}\mathbb{P}_t(x_{t+1}|x_t,\pistar_t(x_t))dx_{t+1}\)}_{=1}dx_{t} \\
& \leq \int_{\StateSpace} w_t(x_t) dx_{t} \leq 1.
\end{align}
In the last equality we used that $\mathring w_t \leq w_t$ pointwise (this follows directly by the definition), while step $F$ is due to Fubini's theorem for changing the order of integration.

\paragraph{Starting the main recursion.}
Let $\mathcal L_t$, $\mathcal M_t$ $\mathcal S_t$ be the event that the norm of the feature evaluated at $x_t$ and the optimal policy is large and small, respectively ($x_t$ is the random variable):
\begin{align}
	\mathcal S_t & \defeq \Big\{x_t: \Smallcondition{tk}{ \phi_t(x_t,\pistar_{t}(x_t)}\Big\} \\
		\mathcal M_t & \defeq \Big\{x_t: \Mediumcondition{tk}{\phi_t(x_t,\pistar_{t}(x_t)}\Big\} \\
	\mathcal L_t & \defeq \Big\{x_t: \Largecondition{tk}{\phi_t(x_t,\pistar_{t}(x_t)}\Big\}
\end{align}

First consider integrating over the state space with respect to $w_t(\cdot)$ the value function difference over the trajectories at step $t$ (the lower bound below holds for every term inside the expectation because $\pistar$ is the optimal policy on $\Qstar$ but not necessarily on $\widetilde Q$):
\begin{align}
& \int_{\StateSpace} w_t(x_t) \( \widetilde V_{t} - \Vstar_{t}\)(x_t) d x_t  \geq  \int_{\StateSpace} w_t(x_t) \( \widetilde Q_{t}(x_t,\pistar_t(x_t)) - \Qstar_{t}(x_t,\pistar_t(x_t)) \)dx_t
\end{align}
and then partition the statespace $\StateSpace$:

\begin{align}
& = \underbrace{ \int_{\mathcal S_t} w_t(x_t) \( \widetilde Q_{t} - \Qstar_{t} \)(x_t,\pistar_t(x_t)) dx_t}_{S} +  \underbrace{ \int_{\mathcal M_t} w_t(x_t) \( \widetilde Q_{t} - \Qstar_{t} \) (x_t,\pistar_t(x_t)) dx_t}_{M} \\ &\qquad+ \underbrace{ \int_{\mathcal L_t} w_t(x_t) \( \widetilde Q_{t} - \Qstar_{t} \) (x_t,\pistar_t(x_t)) dx_t}_{L}.
 \end{align}

We analyze each term individually.
\paragraph{Bound on the $L$ term.} Whenever $x_t \in \mathcal L_t$, Corollary \ref{cor:linear_Q} bounds the misspecification error so that:
 \begin{align}
 	L &= \int_{\mathcal L_t} w_t(x_t) \(H-t+1 - \Qstar_{t}  (x_t,\pistar_t(x_t))\) dx_t \\&\geq \int_{\mathcal L_t} w_t(x_t)[-(H-t+1)\epsilon]dx_t \geq -4H \epsilon \int_{\mathcal L_t} w_t(x_t)dx_t
 \end{align}
\paragraph{Bound on the $S$ term.} In states where the Q function is linear, the decomposition from
Lemma \ref{rlsvi:lemQ_decomp} gives us:
 \begin{align}
  & S = \int_{\mathcal S_t} w_t(x_t) \Biggm\{ \E_{x'|x_t,\pistar_t(x_t)}[ (\widetilde V_{t+1,\pk} - \Vstar_{t+1})(x') ] +  \phi_t(x_t,\pistar_{t}(x_t))^\top \( \widetilde\eta_{t\pk} + \widetilde \xi_{t\pk} +\widetilde\lambda^\star_{t\pk}\) + \widetilde m_{tk}^\star(x_t, \pi^\star_t(x_t))  \Biggm\} dx_t\\
  &\geq \int_{\mathcal S_t} w_t(x_t) \Biggm\{ \E_{x'|x_t,\pistar_t(x_t)}[ (\widetilde V_{t+1,\pk} - \Vstar_{t+1})(x') ] +  \phi_t(x_t,\pistar_{t}(x_t))^\top \widetilde \xi_{t\pk}  \Biggm\} dx_t \\ &\qquad\qquad +  \int_{\mathcal S_t} w_t(x_t) \Biggm\{ \phi_t(x_t,\pistar_{t}(x_t))^\top\(\widetilde\eta_{t\pk} + \widetilde\lambda^\star_{t\pk} +  \widetilde m_{tk, \phi}^\star \) - 4 H \epsilon \Biggm\} dx_t
 \end{align}
Where we introduce the new notation $ \widetilde m_{tk, \phi}^\star$ to indicate the portion of the misspecification term that depends on  the features $ \phi$. Explicitly, only taking the terms that multiply $ \phi $ from the definition of $ \widetilde m_{tk}^\pi$ in Eq. (\ref{eq:def_mtkpi}), we get that
\begin{align}
    \widetilde m_{tk, \phi}^\star \defeq \Sigma_{tk}^{-1}\sum_{i=1}^{k-1} \phi_{ti} \bigg[\Delta_t^r(s_{ti}, a_{ti}) + \int_{s'}\Delta_t^P(s'|s_{ti}, a_{ti})\overline V_{t+1,k}(s') \bigg].
\end{align}
This lets us split the two terms from the right hand side of the bound in Lemma \ref{rlsvi:lemmisspec_bound} so that the $ 4H\epsilon$ that does not depend on $ \phi$ can be introduced here.

% Since we condition on the good event, applying Definitions  \ref{def:noise_bounds} and \ref{def:good_event} and substituting the definition of $ \phi_t^\star $ into the proof of Lemma \ref{lem:misspec_bound}, we get
%  \begin{align}
%   & S \geq \int_{\mathcal S_t} w_t(x_t) \Biggm\{ \E_{x'|x_t,\pistar_t(x_t)}[ (\widetilde V_{t+1,\pk} - \Vstar_{t+1})(x') ] + \phi_t(x_t,\pistar_{t}(x_t))^\top \widetilde \xi_{t\pk} \Biggm\} dx_t -  \sqrt{\nu_k(\delta')}\|\phi_t^\star\|_{\Sigma_{tk}^{-1}} - 4 H \epsilon.
% \end{align}

\paragraph{Bound on the $M$ term.} This term interpolates between the values we would get out of the linearity of the representation and the default values.
Define $q^1$ and $q^2$ to be the coefficient of the linear interpolation (see~\eqref{eqn:defQthetasigma}), then:
\begin{align}
    M
    &= \int_{\mathcal M_t} w_t(x_t) \left( q^1 \widetilde{Q}_t(x_t, \pi_t^\star(x_t)) + q^2 (H-t+1) -Q^\star(x_t, \pi_t^\star(x_t))\right) \mathrm{d}x_t\\
    &= \int_{\mathcal M_t} w_t(x_t) \left( q^1
    \underbrace{
    (\widetilde{Q}_t-Q^\star)(x_t, \pi_t^\star(x_t))
    }_{\text{as in } S}
    + q^2
    \underbrace{
    ((H-t+1) -Q^\star(x_t, \pi_t^\star(x_t)))
    }_{\text{as in } L}
    +\underbrace{(q^1 + q^2 - 1)}_{=0}Q^\star(x_t, \pi_t^\star(x_t))\right) \mathrm{d}x_t\\
   &\geq \int_{\mathcal M_t} w_t(x_t) \frac{\Condition{tk}{\phi_{t}(x_t,\pistar(x_t))}-\alpha_L}{\alpha_U-\alpha_L} \Biggm\{ \E_{x'|x_t,\pistar_t(x_t)}[ (\widetilde V_{t+1,\pk} - \Vstar_{t+1})(x') ] + \phi_t(x_t,\pistar_{t}(x_t))^\top \widetilde \xi_{t\pk} \Biggm\} dx_t \\&\qquad\qquad +   \int_{\mathcal M_t} w_t(x_t) \frac{\Condition{tk}{\phi_{t}(x_t,\pistar(x_t))}-\alpha_L}{\alpha_U-\alpha_L} \Biggm\{ \phi_t(x_t,\pistar_{t}(x_t))^\top\(\widetilde\eta_{t\pk} + \widetilde\lambda^\star_{t\pk} + \widetilde m_{tk, \phi}^\star \) - 4H\epsilon \Biggm\} dx_t
   \\&\qquad\qquad - 4H\epsilon \int_{\mathcal M_t} w_t(x_t) \frac{\alpha_U - \Condition{tk}{\phi_{t}(x_t,\pistar(x_t))}}{\alpha_U-\alpha_L} dx_t\\
   & \geq \int_{\mathcal M_t} w_t(x_t) \frac{\Condition{tk}{\phi_{t}(x_t,\pistar(x_t))}-\alpha_L}{\alpha_U-\alpha_L} \Biggm\{ \E_{x'|x_t,\pistar_t(x_t)}[ (\widetilde V_{t+1,\pk} - \Vstar_{t+1})(x') ] + \phi_t(x_t,\pistar_{t}(x_t))^\top \widetilde \xi_{t\pk} \Biggm\} dx_t \\&\qquad\qquad +   \int_{\mathcal M_t} w_t(x_t) \frac{\Condition{tk}{\phi_{t}(x_t,\pistar(x_t))}-\alpha_L}{\alpha_U-\alpha_L} \Biggm\{ \phi_t(x_t,\pistar_{t}(x_t))^\top\(\widetilde\eta_{t\pk} + \widetilde\lambda^\star_{t\pk} + \widetilde m_{tk, \phi}^\star \) \Biggm\} dx_t
   \\&\qquad\qquad - 4H\epsilon \int_{\mathcal M_t} w_t(x_t) dx_t
   % &=  \int_{\mathcal M_t} w_t(x_t) \frac{\Condition{tk}{\phi_{t}(x_t,\pistar(x_t))}-\alpha_L}{\alpha_U-\alpha_L} \Biggm\{ \E_{x'|x_t,\pistar_t(x_t)}[ (\widetilde V_{t+1,\pk} - \Vstar_{t+1})(x') ] + \phi_t(x_t,\pistar_{t}(x_t))^\top \widetilde \xi_{t\pk} \\&\qquad\qquad-  \sqrt{\nu_k(\delta')}\|\phi_t(x_t,\pistar_{t}(x_t))\|_{\Sigma_{tk}^{-1}})  \Biggm\} dx_t
   % - 4H\epsilon \int_{\mathcal M_t} w_t(x_t) dx_t
  \end{align}

\paragraph{Conclusion.} Together, the bounds on $S,M,L$ we have obtained can be combined (also with the definition of $\mathring w$) to obtain:
 \begin{align}
  \int_{\StateSpace} w_t(x_t) \( \widetilde V_{t} - \Vstar_{t}\)(x_t) d x_t &\geq \int_{\StateSpace} \mathring w_t(x_t) \Biggm\{ \E_{x'|x_t,\pistar_t(x_t)}[ (\widetilde V_{t+1,\pk} - \Vstar_{t+1})(x') ] + \phi_t(x_t,\pistar_{t}(x_t))^\top \widetilde \xi_{t\pk}  \Biggm\} dx_t \\
  &\qquad\qquad + \int_{\mathcal S} \mathring w_t(x_t) \Biggm\{ \phi_t(x_t,\pistar_{t}(x_t))^\top\(\widetilde\eta_{t\pk} + \widetilde\lambda^\star_{t\pk} +  \widetilde m_{tk, \phi}^\star \) \Biggm\} dx_t
  \\
  &\qquad\qquad - 4H\epsilon \int_{\StateSpace}  w_t(x_t) d x_t
  \end{align}
Applying the definitions of $ \phi_t^\star$ and $ w_{t+1}$ and using the fact that the $ w_t$ integrate to at most 1 from (\ref{eqn:wprop}), we get that 
 \begin{align}
  \int_{\StateSpace} w_t(x_t) \( \widetilde V_{t} - \Vstar_{t}\)(x_t) d x_t &\geq 
  \int_{\StateSpace} w_t(x_t) (\widetilde V_{t+1,\pk} - \Vstar_{t+1})(x_{t+1})dx_{t+1} + (\phi_t^\star)^\top  \widetilde \xi_{t\pk} \\
  &\qquad\qquad + (\phi_t^\star)^\top \(\widetilde\eta_{t\pk} + \widetilde\lambda^\star_{t\pk} +  \widetilde m_{tk, \phi}^\star \)
  \\
  &\qquad\qquad - 4H\epsilon
  \end{align}
Then by conditioning on the good event and applying Definitions  \ref{rlsvi:defnoise_bounds} and \ref{rlsvi:defgood_event} (with our modified version of Lemma \ref{rlsvi:lemmisspec_bound}) we get that
 \begin{align}
  \int_{\StateSpace} w_t(x_t) \( \widetilde V_{t} - \Vstar_{t}\)(x_t) d x_t &\geq 
  \int_{\StateSpace} w_t(x_t) (\widetilde V_{t+1,\pk} - \Vstar_{t+1})(x_{t+1})dx_{t+1} + (\phi_t^\star)^\top \widetilde \xi_{t\pk} - \sqrt{\nu_t(\delta')} \|\phi_t^\star\|_{\Sigma_{tk}^{-1}}
  - 4H\epsilon
  \end{align}
Induction concludes the proof.
\end{proof}

\begin{lem}[Optimism]
\label{lem:Optimism}
For any episode $k$ if $ 0 < \delta <  \Phi(-1)$ and $ 0 \leq \epsilon < \frac{1}{10H}$:
	\begin{align}
	\Pro\(\widetilde V_{1}(s_{1k}) - \Vstar_1(s_{1k}) + 4 H^2\epsilon  \geq 0 \mid s_{1k}, \H_k\)\geq \Phi(-1) / 2
	\end{align}
\end{lem}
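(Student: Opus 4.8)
The plan is to reduce the statement, via Lemma~\ref{lem:OptimisticRecursion}, to an anti-concentration estimate for a one-dimensional Gaussian random walk, and then absorb the failure of the good event with a union bound. Throughout I condition on $s_{1k}$ and $\H_k$. This fixes every design matrix $\Sigma_{tk}$ (each depends only on the features of episodes $1,\dots,k-1$) and hence fixes the vectors $\phi^\star_t$ furnished by Lemma~\ref{lem:OptimisticRecursion}; crucially these $\phi^\star_t$ do not depend on the fresh pseudonoise $\widetilde\xi_{1k},\dots,\widetilde\xi_{Hk}$, which remain independent $\mathcal N(0,H\nu_k(\delta')\Sigma_{tk}^{-1})$ variables. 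On the good event $\widetilde{\mathcal{G}}_k$, Lemma~\ref{lem:OptimisticRecursion} gives
\begin{align}
(\widetilde V_{1} - \Vstar_1)(s_{1k}) + 4H^2\epsilon \geq W, \qquad W \defeq \sum_{t=1}^H \Big[ (\phi^\star_t)^\top \widetilde\xi_{tk} - \sqrt{\nu_k(\delta')}\,\|\phi^\star_t\|_{\Sigma_{tk}^{-1}} \Big],
\end{align}
so it suffices to lower bound $\Pro(W\geq 0)$ and to control $\Pro(\widetilde{\mathcal{G}}_k^c)$.

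Next I would analyze the random walk, which is a centered Gaussian $G \defeq \sum_{t=1}^H (\phi^\star_t)^\top\widetilde\xi_{tk}$ minus a deterministic bias, $W = G - b$. Because the $\widetilde\xi_{tk}$ are independent across $t$, $G$ is $\mathcal N(0,\sigma_G^2)$ with $\sigma_G^2 = H\nu_k(\delta')\sum_{t=1}^H \|\phi^\star_t\|_{\Sigma_{tk}^{-1}}^2$. The decisive observation is that the pseudonoise variance $H\nu_k(\delta')$ was tuned so that the bias never exceeds one standard deviation: by Cauchy--Schwarz across the $H$ timesteps,
\begin{align}
b \defeq \sqrt{\nu_k(\delta')}\sum_{t=1}^H \|\phi^\star_t\|_{\Sigma_{tk}^{-1}} \leq \sqrt{H\nu_k(\delta')}\, \Big(\sum_{t=1}^H \|\phi^\star_t\|_{\Sigma_{tk}^{-1}}^2\Big)^{1/2} = \sigma_G.
\end{align}
Hence, letting $Z$ be a standard normal and treating the degenerate case $\sigma_G = 0$ (where $b=0$ and $W=0$ surely) separately, $\Pro(W\geq 0) = \Pro(G \geq b) = \Pro(Z \geq b/\sigma_G) \geq \Pro(Z \geq 1) = \Phi(-1)$.

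Finally I would combine the two ingredients. Write $A$ for the event $\{(\widetilde V_1 - \Vstar_1)(s_{1k}) + 4H^2\epsilon \geq 0\}$. Since $\{W\geq 0\}\cap\widetilde{\mathcal{G}}_k \subseteq A$,
\begin{align}
\Pro(A \mid s_{1k},\H_k) \geq \Pro(\{W\geq 0\}\cap\widetilde{\mathcal{G}}_k \mid s_{1k},\H_k) \geq \Pro(W\geq 0 \mid s_{1k},\H_k) - \Pro(\widetilde{\mathcal{G}}_k^c \mid s_{1k},\H_k).
\end{align}
By Lemma~\ref{lem:good_prob} the good event for episode $k$ fails with probability at most $\delta/(8K) \leq \delta/8$, so using $\Pro(W\geq 0)\geq \Phi(-1)$ and the hypothesis $\delta < \Phi(-1)$ we obtain $\Pro(A\mid s_{1k},\H_k) \geq \Phi(-1) - \delta/8 \geq \Phi(-1)/2$, as claimed.

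The step I expect to demand the most care is the bookkeeping in the last display rather than the random-walk calculation: one must check that conditioning on $s_{1k},\H_k$ genuinely leaves the $\widetilde\xi_{tk}$ as fresh independent Gaussians — so the Gaussian anti-concentration bound is legitimately a conditional statement — while at the same time freezing the $\Sigma_{tk}$ and the $\phi^\star_t$, and that the part of $\widetilde{\mathcal{G}}_k$ driven by the environment noise inside $\widetilde\eta_{tk}$ (as opposed to the pseudonoise) is correctly folded into the failure term. The factor of two between $\Phi(-1)$ and $\Phi(-1)/2$ in the conclusion is exactly the slack that accommodates this good-event failure probability.
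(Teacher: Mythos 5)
Your proposal is correct and follows essentially the same route as the paper's proof: invoke Lemma~\ref{lem:OptimisticRecursion} to reduce optimism on the good event to positivity of the Gaussian random walk, use Cauchy--Schwarz to show the bias is at most one standard deviation so the walk is nonnegative with probability at least $\Phi(-1)$, and absorb $\Pro(\widetilde{\mathcal G}_k^c)\leq \delta/8$ via a union bound together with $\delta < \Phi(-1)$. The only cosmetic difference is that you argue via the direct inclusion $\{W\geq 0\}\cap\widetilde{\mathcal G}_k\subseteq A$ rather than the paper's complement-based bookkeeping with the events $\mathcal W_k$ and $\mathcal P_k$.
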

\begin{proof}
All events in this lemma are conditioned on $ s_{1k}, \mathcal{H}_k$ so that the only random variables are $ \widetilde\xi_{tk}$ for $ t \in [H]$.
Consider the probability of being optimistic at the beginning of episode $k$, and call this event $\widetilde{\mathcal{O}}_k$:
\begin{align}
\widetilde{\mathcal{O}}_k = \Big\{\(\widetilde V_{1k} - \Vstar_{1}\)(s_{1k}) \geq - 4 H^2\epsilon\Big\}.
\end{align}
For $ \epsilon < \frac{1}{10H}$, by elementary probability and using Lemma \ref{rlsvi:lemgood_prob} to bound the probability of the good event:
\begin{align}
\label{eqn:po}
\Pro(\widetilde{\mathcal{O}}_k) & = 1 - \Pro(\widetilde{\mathcal{O}}_k^c) = 1 - \Pro(\widetilde{\mathcal{O}}_k^c\cap \widetilde\G_k)  -\Pro(\widetilde{\mathcal{O}}_k^c \cap \widetilde\G^c_k) \geq 1 - \Pro(\widetilde{\mathcal{O}}_k^c \cap \widetilde\G_k)  -\Pro( \widetilde\G^c_k)\\& \geq 1 - \Pro(\widetilde{\mathcal{O}}_k^c \cap \widetilde\G_k) - \delta/8.
\end{align}
Notice that under $\G_k$, Lemma \ref{rlsvi:lemOptimisticRecursion} allows us to deduce that
\begin{align}
\label{eqn:twoinequalities}
	\(\widetilde V_{1} - \Vstar_1\)(s_{1k}) \geq \sum_{t=1}^H [(\phi^\star_t)^\top \widetilde \xi_{t\pk} - \sqrt{\nu_{k}(\delta)}\|\phi_t^\star \|_{\Sigma_{t k}^{-1}}] - 4 H^2\epsilon.
\end{align}
So, defining
\begin{align}
	\mathcal W_k \defeq \Big\{\(\widetilde V_{1} - \Vstar_1\)(s_{1k}) \geq 	\sum_{t=1}^H [(\phi^\star_t)^\top \widetilde \xi_{t\pk} - \sqrt{\nu_{k}(\delta)}\|\phi_t^\star \|_{\Sigma_{t k}^{-1}}] - 4 H^2\epsilon \Big\},
	\end{align}
we have that
\begin{align}
	\Pro\( \widetilde{\mathcal{O}}_k^c \cap \widetilde{\G}_k \) \leq \Pro\( \widetilde{\mathcal{O}}_k^c \cap \mathcal W_k \).
\end{align}

Along with equation (\ref{eqn:po}) we get:
\begin{align}
\Pro(\widetilde{\mathcal{O}}_k) & \geq 1 - \Pro\(\widetilde{\mathcal{O}}_k^c \cap \mathcal W_k \) - \delta/8.
\end{align}
Now, define the event that the random walk is positive in episode $k$:
\begin{align}
\mathcal P_k = \Big\{ \sum_{t=1}^H [(\phi^\star_t)^\top \widetilde \xi_{t\pk} - \sqrt{\nu_{k}(\delta)}\|\phi_t^\star \|_{\Sigma_{t k}^{-1}}]\geq 0 \Big\}
\end{align}
Now note that chaining the inequalities from the definitions of $ \overline{\mathcal{O}}_k$ and $ \mathcal{W}_k$ we can see that
\begin{align}
    \widetilde{\mathcal{O}}_k \cap \mathcal{W}_k \subseteq {\mathcal{P}}^c_k.
\end{align}
Thus we have
\begin{align}\label{eqn:final_p_bound}
\Pro(\widetilde{\mathcal{O}}_k) & \geq 1 - \Pro\( {\mathcal P}^c_k \) - \delta \geq \Pro\( \mathcal P_k \) - \delta / 8.
\end{align}
Recall that by the definition in the algorithm:
\begin{align}
		\widetilde \xi_{tk} \sim \mathcal N(0,H\nu_k(\delta')\Sigma^{-1}_{tk}).
	\end{align}
Now, since we have conditioned on $\H_k$ and $s_{1k}$, by Lemma \ref{rlsvi:lemOptimisticRecursion} we have that $\phi^\star_t$ is non-random and thus by properties of the normal distribution:
	\begin{align}
		(\phi^\star_t)^\top\widetilde \xi_{tk} \sim \mathcal N\(0,H\nu_k(\delta')\| \phi^\star_t \|^2_{\Sigma^{-1}_{tk}} \)
	\end{align}
	and
		\begin{align}
		\sum_{t=1}^{H}(\phi^\star_t)^\top\widetilde \xi_{tk} \sim \mathcal N\(0, H\nu_k(\delta') \sum_{t=1}^{H} \| \phi^\star_t \|^2_{\Sigma^{-1}_{tk}} \).
	\end{align}
Applying Cauchy-Schwarz we get that
\begin{align}
    \sum_{t=1}^H  \sqrt{\nu_{k}(\delta')}\|\phi_t^\star \|_{\Sigma_{t k}^{-1}} \leq \sqrt{H\nu_{k}(\delta')}\(\sum_{t=1}^H \|\phi_t^\star \|_{\Sigma_{t k}^{-1}}^2\)^{1/2},
\end{align}
which is the standard deviation of the above random variable.
Thus, we can conclude that
\begin{align}
	\Pro(\mathcal P_k) \geq \Pro\( \sum_{t=1}^{H}(\phi^\star_t)^\top\widetilde \xi_{tk}   \geq \sqrt{H\nu_{k}(\delta')}\(\sum_{t=1}^H \|\phi_t^\star \|_{\Sigma_{t k}^{-1}}^2\)^{1/2} \)\geq \Phi\(-1\).
\end{align}
Plugging this in to (\ref{eqn:final_p_bound}) and noting that $ \delta/8 < \Phi(-1)/2$ we get the result.
\end{proof}

\newpage
\section{Regret Bound}\label{sec:regret}
In this section we prove the main regret bound. This is split into two parts: one for the estimation error of each $ \overline V_{tk}$ compared to $ V_t^{\pi_k}$ and one for the pessimism of $ \overline V_{tk}$ compared to $ V_t^\star$.

\subsection{Main Theorem Statement}

\begin{theorem}[Main Result: High Probability Regret Bound for RLSVI with Approximately Linear Rewards and Low-Rank Transitions]
\label{rlsvi:thmMainResult}
Under Assumption \ref{rlsvi:assapprox_lowrank} with $\Phi(-1) > \delta > 0$ and $\lambda = 1$ and choosing $ \alpha_L, \alpha_U, \sigma^2  = H\nu_k(\delta)$ as defined in Section \ref{rlsvi:sec:good} and letting $ T=HK$, with probability at least $1-\delta$ for \textsc{opt-rlsvi} jointly for all episodes $K$:
\begin{equation}
\Regret(K) \defeq \sumk \(\Vstar_1 -  V^{\pi_{\pk}}_1\) (\sok) =\widetilde O\(\sqrt{\gamma_K(\delta)}\sqrt{dHT}  + \frac{H^2d}{\alpha_L^2} + \epsilon HT \) .
\end{equation}
\end{theorem}
\begin{proof}
We have the following decomposition:
\begin{align}
\Regret(K) \defeq \sumk \(\Vstar_1 -  V^{\pi_{\pk}}_1\) (\sok) & = \sumk \(\Vstar_1 - \vo_{1k}\)(s_{1k}) + \sumk \(\vo_{1k} - V^{\pi_{\pk}}_1\) (\sok).
\end{align}
Taking a union bound over the results of Lemma \ref{rlsvi:lemestimation} and Lemma \ref{rlsvi:lemPessimism} yields the result.
\end{proof}

\begin{corollary}[High Probability Regret Bound for RLSVI with Approximately Linear Rewards and Low-Rank Transitions] Under Assumption \ref{rlsvi:assapprox_lowrank} and if additionally $ L_\phi = \widetilde O(1)$, and $L_\psi, L_r = \widetilde O(d)$, then with probability at least $1-\delta$ for \textsc{opt-rlsvi} it holds that:
\begin{equation}
\Regret(K) \defeq \sumk \(\Vstar_1 -  V^{\pi_{\pk}}_1\) (\sok) = \widetilde O\(H^2 d^2 \sqrt{T} + H^5 d^4  + \epsilon dH (1 + \epsilon dH^2) T  \).
\end{equation}
\end{corollary}
\begin{proof}
Recall from Definition \ref{rlsvi:defnoise_bounds} we have that
\begin{align}
    \sqrt{\gamma_K(\delta)} = \widetilde O((Hd)^{3/2} + \sqrt{Hd \lambda} L_\phi(3HL_\psi + L_r) + \epsilon \sqrt{dHT}) = \widetilde O((Hd)^{3/2} + \epsilon \sqrt{dHT})
\end{align}
And combining with Definition \ref{rlsvi:defdefault} we have
\begin{align}
    \frac{1}{\alpha_L^2} = \widetilde O((Hd)^{3} + \epsilon^2 dHT)
\end{align}
Plugging these values into Theorem \ref{rlsvi:thmMainResult} we get with probability at least $ 1- \delta $ that
\begin{align}
    \Regret(K) &= \widetilde O\(((Hd)^{3/2} + \epsilon \sqrt{dHT})\sqrt{dHT}  + (H^2d)((Hd)^{3} + \epsilon^2 dHT) + \epsilon HT \)\\
    &= \widetilde O\(H^2 d^2 \sqrt{T} + \epsilon dHT + H^5 d^4 + \epsilon^2 d^2 H^3 T + \epsilon HT \)\\
    &= \widetilde O\(H^2 d^2 \sqrt{T} + H^5 d^4  + \epsilon dH(1 + \epsilon dH^2) T  \).
\end{align}
\end{proof}

\subsection{Bounding the Estimation Error}

\begin{lemma}[Bound on Estimation]
\label{rlsvi:lemestimation}
It holds with probability at least $ 1- \delta / 2 $ that:
\begin{align}
	\sumk \(\vo_{1\pk} - V^{\pi_{\pk}}_1\)(\sok) = \widetilde O\((\sqrt{\nu_K(\delta')} + \sqrt{\gamma_K(\delta')})H \sqrt{d}\sqrt{K}  + H^2K \epsilon + \frac{H^2 d}{\alpha_L^2} \)
\end{align}
\end{lemma}
\begin{proof}
The proof proceeds by induction over $ t \in [H]$ followed by some algebra to get the bound. Denote by $ G_k $ the event that $ \overline {\mathcal{G}}_\ell$ (see Def.~\ref{rlsvi:defgood_event}) holds for all $ \ell \leq k$, so that $ G_k $ is measurable with respect to $ \overline{\mathcal{H}}_k$.

Consider a generic timestep $t$: we split into two cases. Either 1) we have $ \|\phi_{tk}\|_{\Sigma_{tk}^{-1}} \leq \alpha_L$ which we will call $ \mathcal{S}_{tk}$ or 2) we have $ \|\phi_{tk}\|_{\Sigma_{tk}^{-1}} > \alpha_L$ which we will call $ \mathcal{S}_{tk}^c$. Under $ \mathcal{S}_{tk}$ the Q function is linear (see Eq.~\ref{eqn:defQthetasigma} or Def.~\ref{rlsvi:defQdef}), and under $ \mathcal{S}_{tk}^c$ we can upper bound the value function difference by $ H $ in the worst case under $ G_k$. Thus we have
\begin{align}
    \(\vo_{t\pk} - V^{\pi_{\pk}}_t\)(\stk)\1\{G_k\} & =  \1\{G_k\}\(\(\vo_{t\pk} - V^{\pi_{\pk}}_t\)(\stk)\1\{\mathcal S_{tk}\} +  \(\vo_{t\pk} - V^{\pi_{\pk}}_t\)(\stk)\1\{\mathcal S^c_{tk}\}\) \\
    &= \1\{G_k\}\(\(\vo_{t\pk}(s_{tk}) - Q^{\pi_{\pk}}_t(s_{tk}, a_{tk})\)\1\{\mathcal S_{tk}\} +  \(\vo_{t\pk} - V^{\pi_{\pk}}_t\)(\stk)\1\{\mathcal S^c_{tk}\}\)\\
    & \leq   \1\{G_k\}\(\underbrace{\(\phi_{tk}^\top\thetabar_{t\pk} - Q^{\pi_{\pk}}_t(s_{tk}, a_{tk})\) \1\{\mathcal S_{tk}\}}_{S} + \underbrace{H\1\{\mathcal S^c_{tk}\}}_{S^c}\).
\end{align}
We focus on the first term, term $S$.
Applying Lemma \ref{rlsvi:lemQ_decomp} we have
\begin{align}
    \phi_{tk}^\top\thetabar_{t\pk} - Q_{t}^{\pi_{\pk}}(\stk,a_{tk}) = \E_{s'|s,a}[\( \overline V_{t+1,\pk} - V^{\pi_k}_{t+1} \)(s')]  + \phi_{tk}^\top (\overline \eta_{tk} + \overline\xi_{tk} + \overline \lambda_{tk}^{\pi_k}) + \overline m_{tk}^{\pi_k}(s,a).
\end{align}
And under $ \overline{\mathcal{G}}_k$ we can bound this by
\begin{align}
    \phi_{tk}^\top\thetabar_{t\pk} - Q_{t}^{\pi_{\pk}}(\stk,a_{tk}) \leq \E_{s'|s_{tk},a_{tk}}[\( \overline V_{t+1,\pk} - V^{\pi_k}_{t+1} \)(s')]  + (\sqrt{\nu_k(\delta')} + \sqrt{\gamma_k(\delta')})\|\phi_{tk}\|_{\Sigma_{tk}^{-1}} + 4 H \epsilon.
\end{align}
Then we can define
\begin{align}
    \dot\zeta_{tk} \defeq \1\{G_k\}\1\{\mathcal S_{tk}\}\(\E_{s'|s_{tk},a_{tk}}[\( \overline V_{t+1,\pk} - V^{\pi_k}_{t+1} \)(s')] - \( \overline V_{t+1,\pk} - V^{\pi_k}_{t+1} \)(s_{t+1,k})\).
\end{align}
Note that due to the indicator of $ G_k$ we have that each $ |\dot\zeta_{tk}| \leq 2H$ a.s. and $\mathbb{E}[\dot\zeta_{tk} | \overline{\mathcal H}_k \cup \mathcal{H}_{tk}] =0 $. Then $(\dot\zeta_{tk}, \overline{\mathcal H}_k \cup \mathcal{H}_{tk})_{t,k}$ is an MDS. So, applying Azuma-Hoeffding we have with probability at least $ 1-\delta/4$ that $ \sum_{k=1}^K \sum_{t=1}^H \dot \zeta_{tk} = \widetilde O (H\sqrt{T})$.

With this definition,
\begin{align}
    \1\{G_k\}S \leq \1\{G_k\}\1\{\mathcal S_{tk}\}\(\( \overline V_{t+1,\pk} - V^{\pi_k}_{t+1} \)(s_{t+1,k}) + (\sqrt{\nu_k(\delta')} + \sqrt{\gamma_k(\delta')})\|\phi_{tk}\|_{\Sigma_{tk}^{-1}} + 4 H \epsilon\) + \dot\zeta_{tk}.
\end{align}
Combining it all we have
\begin{align}
    \1\{G_k\} \(\vo_{t\pk} - V^{\pi_{\pk}}_t\)(\stk) &\leq \1\{G_k\}\bigg[\( \overline V_{t+1,\pk} - V^{\pi_k}_{t+1} \)(s_{t+1,k}) \1\{\mathcal S_{tk}\} \\& + \(\Big(\sqrt{\nu_k(\delta')} + \sqrt{\gamma_k(\delta')}\Big)\|\phi_{tk}\|_{\Sigma_{tk}^{-1}} + 4 H \epsilon  \)\1\{\mathcal S_{tk}\} +  H\1\{\mathcal S^c_{tk}\}\bigg] + \dot\zeta_{tk}.
\end{align}
And induction gives us
\begin{align}
    \1\{G_k\}\(\vo_{1\pk} - V^{\pi_{\pk}}_1\)(\stk) &\leq \1\{G_k\}\sum_{t=1}^H \bigg[ \(\Big(\sqrt{\nu_k(\delta')} + \sqrt{\gamma_k(\delta')}\Big)\|\phi_{tk}\|_{\Sigma_{tk}^{-1}} + 4 H \epsilon \)\( \Pi_{\tau=1}^t \1\{\mathcal S_{\tau k}\} \) \\
    & + H\( \Pi_{\tau=1}^{t-1} \1\{\mathcal S_{\tau k}\}\) \1\{\mathcal S^c_{tk}\}  \bigg] + \sum_{t=1}^H \dot\zeta_{tk}
\end{align}
Now we can sum over $ k $ to attain a bound on the estimation error term of the regret. We will split this in three terms: when all $ \mathcal{S}_{tk}$ occur, when some $ \mathcal{S}_{tk}^c$ occurs, and the martingale difference terms. We can bound the dominant term by exchanging order of summation, pulling out constants, applying Cauchy-Schwarz, and finally applying Lemma \ref{rlsvi:lemfeature_sum_all} to get
\begin{align}
    & \sum_{k=1}^K \1\{G_k\}\sum_{t=1}^H \((\sqrt{\nu_k(\delta')} + \sqrt{\gamma_k(\delta')})\|\phi_{tk}\|_{\Sigma_{tk}^{-1}} + 4 H \epsilon \)\( \Pi_{\tau=1}^t \1\{\mathcal S_{\tau k}\} \)\\
    &\qquad\leq  (\sqrt{\nu_K(\delta')} + \sqrt{\gamma_K(\delta')})\sum_{t=1}^H \sum_{k=1}^K \1\{G_k\} \|\phi_{tk}\|_{\Sigma_{tk}^{-1}}\( \Pi_{\tau=1}^t \1\{\mathcal S_{\tau k}\} \)  + 4 H^2K \epsilon \\
    &\qquad\leq (\sqrt{\nu_K(\delta')} + \sqrt{\gamma_K(\delta')}) \sum_{t=1}^H \sqrt{K}\(\sum_{k=1}^K \|\phi_{tk}\|^2_{\Sigma_{tk}^{-1}} \( \Pi_{\tau=1}^t \1\{\mathcal S_{\tau k}\} \)\)^{1/2}  + 4 H^2K \epsilon\\
    &\qquad\leq (\sqrt{\nu_K(\delta')} + \sqrt{\gamma_K(\delta')}) \sum_{t=1}^H \sqrt{K}\(\sum_{k=1}^K \min\{1, \|\phi_{tk}\|^2_{\Sigma_{tk}^{-1}} \}\)^{1/2} + 4 H^2K \epsilon\\
    &\qquad \leq (\sqrt{\nu_K(\delta')} + \sqrt{\gamma_K(\delta')})H \sqrt{K} \tilde O(\sqrt{d})  + 4 H^2K \epsilon
\end{align}
To get the conclusion we need to show that the desired bound holds with high probability. Note that if $ \epsilon > \frac{1}{10H}$ the bound we are trying to prove is trivially true since it is larger than $ T$. So, assuming $ \epsilon < \frac{1}{10H}$ and applying Lemma \ref{rlsvi:lemgood_prob} we get that $ \bigcap_{k \in [K]} G_k = \bigcap_{k \in [K]} \overline {\mathcal{G}}_k$ occurs with probability at least $ 1 - \delta /8$. Taking a union bound we see that with probability at least $ 1 - \delta/2$ both the $ G_k $ and the bound on the sum of the $ \dot \zeta_{tk}$ hold. Adding the lower order term bound from Lemma \ref{rlsvi:lemWarmup} gives the desired result.
\end{proof}

\subsection{Bounding the Pessimism}

\begin{lemma}[Bound on Pessimism]
\label{rlsvi:lemPessimism}
For any $\Phi(-1) > \delta > 0 $ it holds with probability at least $ 1- \delta / 2 $ that:
\begin{align}
	\sumk \(\Vstar_{1\pk} - \vo_{1\pk} \)(\sok) = \widetilde O\((\sqrt{\nu_K(\delta')} + \sqrt{\gamma_K(\delta')})H \sqrt{d}\sqrt{K}  + H^2K \epsilon + \frac{H^2 d}{\alpha_L^2} \).
\end{align}
\end{lemma}
\begin{proof}
In this section we bound the pessimism term by connecting it to the probability of the algorithm being optimistic and the concentration terms. Essentially, we construct an upper bound on $ V^\star$ and a lower bound on $ \overline{V}_{1k}$ and show that they cannot be too different from each other.

As in the previous proof, we will use indicator functions of a good event. But, in this proof we will not just have the $ \overline{\xi}$ pseudonoise variables but also $ \widetilde\xi$ and $ \underline{\xi}$ (defined later in the proof).  These variables have good events $ \widetilde {\mathcal{G}}_k, \underline{\mathcal{G}}_k$ defined per episode analogous to $ \overline {\mathcal{G}}_k$ (see Def.~\ref{rlsvi:defgood_event}). Accordingly we will now denote by $ G_k $ the event that $ \overline {\mathcal{G}}_\ell \cap \widetilde {\mathcal{G}}_\ell \cap \underline{\mathcal{G}}_\ell$ holds for all $ \ell \leq k$, so that $ G_k $ is measurable with respect to $ \overline{\mathcal{H}}_k$. Note that by Lemma \ref{rlsvi:lemgood_prob} and a union bound over the three pseudonoises we have that $ \bigcap_{k \in [K]} G_k $ occurs with probability at least $ 1 - 3\delta/8$.

First we construct the lower bound.
Let the $\xi_{tk}$'s be vectors in $\R^d$ for $t = 1,\dots,H$, and let $V^\xi_{tk}$ be the value function obtained by running the Least Square Value Iteration procedure in Algorithm \ref{alg:RLSVI} backward with the non-random $\xi_{tk}$ (see definition below) in place of $\overline \xi_{tk}$.
Consider the following minimization program:
\begin{equation}\label{eq:minproblem}
\begin{aligned}
& \min_{\{\xi_{tk} \}_{t=1,\dots,H}}  V^\xi_{1k}(s_{1k}) \\
& \|\xi_{tk}\|_{\Sigma_{tk}}  \leq \sqrt{\gamma_k(\delta')}, \quad \forall t \in [H]
\end{aligned}
\end{equation}
Notice that the constraint condition on the $\xi$ variables is equivalent to the one on the $\overline \xi$ in the definition of $\mathcal{G}_{tk}^{\overline\xi}$ in Definition \ref{rlsvi:defgood_event}, but with $ \xi_{tk}$ replacing the $ \overline{\xi}_{tk}$.
We denote with $\{ \underline \xi_{tk} \}_{t=1,\dots,H}$ a minimizer of the above expression and with $\underline V_{1k}(s_{1k})$ the minimum of the optimization program (the minimum exists because $V^\xi_{1k}(s_{1k})$ is a continuous function of the $\xi$  which are defined on a compact set). Importantly, under $ \overline{\mathcal{G}}_{k}$ we get that
\begin{align}\label{eqn:pessimism_lower}
    \underline{V}_{1k}(s_{1k}) \leq \overline{V}_{1k}(s_{1k})
\end{align}
because $\{ \overline \xi_{tk} \}_{t=1,\dots,H}$ is a feasible solution of the optimization and $ V^{\overline\xi}_{tk}(s_{1k}) = \overline V_{tk}(s_{1k})$.

Next, we want to get an upper bound. Consider drawing an independent and identically distributed copy $\widetilde \xi_{tk}$ of the $\overline \xi_{tk}$'s and run the least square procedure backward to get a new value function $\widetilde V_{tk}$ (for $t \in [H]$) and action-value function $\widetilde Q_{tk}$. Define as $\widetilde{\mathcal O}_k$ the event that $\widetilde V_{1k}(s_{1k})$ is optimistic in the $k$-th episode. Applying Lemma \ref{lem:Optimism} with $\Phi(-1) > \delta > 0$ and $\epsilon \leq \frac{1}{10H}$,
\begin{align}\label{eqn:tilde_prob}
\Pro\( \widetilde{\mathcal{O}}_k \) = \Pro \( \{ \widetilde V_{1k}(s_{1k}) \geq \Vstar_{1k}(s_{1k}) - 4H^2\epsilon\} \) \geq \Phi(-1)/2.
\end{align}
Next using this definition of optimism we can write:
\begin{align}
\label{eqn:someEqh}
\(\Vstar_{1k}-\overline V_{1k}\)(s_{1k}) \1\{\overline{\G}_k\}
& \leq \E_{\widetilde\xi|\widetilde{\mathcal O}_k} \bigg[ \( \widetilde V_{1k}-\overline V_{1k}\)(s_{1k})\bigg]\1\{\overline{\G}_k\} +4H^2\epsilon \\
& \leq \E_{\widetilde\xi|\widetilde{\mathcal O}_k} \bigg[\(\widetilde V_{1k}-\underline V_{1k}\)(s_{1k})\bigg]\1\{\overline{\G}_k\} +4H^2\epsilon.
\end{align}
where the expectations are over the $\widetilde \xi$'s, conditioned on the event $\widetilde{\mathcal{O}}_k$. The second bound follows from Equation (\ref{eqn:pessimism_lower}).

At this point we can use the law of total expectation under $ \widetilde{\mathcal{G}}_k$:
\begin{align}
\E_{\widetilde\xi}\bigg[ \(\widetilde V_{1k} - \underline V_{1k} \)(s_{1k}) \bigg] &  = \E_{\widetilde\xi|\widetilde{\mathcal O}_k}\bigg[ \(\widetilde V_{1k} - \underline V_{1k} \)(s_{1k})\bigg] \Pro(\widetilde{\mathcal O}_k) +  \underbrace{\E_{\widetilde\xi|\widetilde{\mathcal O}_k^c} \bigg[\(\widetilde V_{1k} - \underline V_{1k}  \)(s_{1k})\bigg]}_{\geq 0} \Pro(\widetilde{\mathcal O}_k^c) \\
 & \geq \E_{\widetilde\xi|\widetilde{\mathcal O}_k}\bigg[ \(\widetilde V_{1k} - \underline V_{1k}\)(s_{1k})\bigg] \Pro(\widetilde{\mathcal O}_k).
\end{align}
The lower bound again follows because $\{ \widetilde \xi_{tk} \}_{t=1,\dots,H}$ is a feasible solution of~\eqref{eq:minproblem}, so the neglected term is positive. Chaining the above with (\ref{eqn:tilde_prob}) and (\ref{eqn:someEqh}) and using the definition of $ G_k $ (i.e., $G_k \implies \overline{\mathcal{G}}_k$):
\begin{align}
\1\{G_k\}\(\Vstar_{1k}-\overline V_{1k}\)(s_{1k}) & \leq \1\{G_k\}\frac{2}{\Phi(-1)}\E_{\widetilde\xi}\bigg[ \(\widetilde V_{1k} - \underline V_{1k} \)(s_{1k})\bigg] + 4H^2\epsilon \\
& = \1\{G_k\}\frac{2}{\Phi(-1)} \(\overline V_{1k} - \underline V_{1k} \)(s_{1k}) + \ddot \zeta_k +4H^2\epsilon \\
& = \1\{G_k\}\frac{2}{\Phi(-1)} \(\overline V_{1k} -  V^{\pi_k}_{1} +  V^{\pi_k}_{1} - \underline V_{1k} \)(s_{1k}) + \ddot \zeta_k +4H^2\epsilon \label{eqn:pessimism_final_ref}
\end{align}
where we define
\begin{align}
    \ddot\zeta_{k} \defeq  \1\{G_k\}\frac{2}{\Phi(-1)}\(\E_{\widetilde\xi} \bigg[\widetilde V_{1k} (s_{1k}) \bigg]-  \overline V_{1k} (s_{1k})\)
\end{align}
and note that since the $ \overline{\xi}_{tk} $ and $ \widetilde \xi_{tk}$ are iid, so are $ \widetilde V_{1k}$ and $ \overline{V}_{1k}$.
Then $(\ddot\zeta_{k}, {\mathcal H}_{k-1})_{k}$ is an MDS %\info{check that $\mathbb{E}[\ddot\zeta_{k}|\overline{\mathcal H}_k)_{k} ] =0$} 
and due to the indicator function each term is bounded in absolute value by 2H. So, applying Azuma-Hoeffding we have with probability at least $ 1-\delta/16$ that $ \sum_{k=1}^K \ddot \zeta_{tk} = \widetilde O (H\sqrt{K})$.

Now we decompose
\begin{align}
    \1\{G_k\}\(\overline V_{1k} -  V^{\pi_k}_{1} +  V^{\pi_k}_{1} - \underline V_{1k} \)(s_{1k}) = \1\{G_k\}\(\overline V_{1k} -  V^{ \pi_k}_{1}\)(s_{1k})  +  \1\{G_k\}\(V^{\pi_k}_{1} - \underline V_{1k} \)(s_{1k})
\end{align}
The first term is the estimation error that we bounded in Lemma \ref{rlsvi:lemestimation}.

For the second term, we can derive the same bound, but require a slightly modified proof. As before, we set up the recursion by considering a generic timestep $ t $ and splitting into cases, bounding the difference by $H $ on $ \mathcal{S}_{tk}^c$ (see definition in Lem.~\ref{rlsvi:lemestimation}):
\begin{align}
    \1\{G_k\}\( V^{ \pi_k}_{t} - \underline V_{tk} \)(s_{tk}) \leq \1\{G_k\}\(\( V^{ \pi_{\pk}}_t - \underline{V}_{tk}\)(\stk)\1\{\mathcal S_{tk}\} +  H\1\{\mathcal S^c_{tk}\}\)
\end{align}
% thanks to lemma \ref{rlsvi:lemDefaultValueFunction}.
Now consider the term where $\Smallcondition{tk}{\phi_{tk}}$ holds. First note that since $ a_{tk} $ is the action that maximizes $ \overline{Q}_{tk}$,
 \begin{align}
 \label{eqn:129}
 \( V^{ \pi_k}_{t} - \underline V_{tk} \)(s_{tk}) & = Q^{ \pi_k}_{t}(s_{tk}, a_{tk}) - \underline V_{tk}(s_{tk})  \leq \(Q^{\pi_k}_{t} - \underline Q_{tk} \)(s_{tk},a_{tk}) = Q^{\pi_k}_{t}(s_{tk},a_{tk}) - \phi_{tk}^\top \underline \theta_{tk} .
\end{align}
Applying Lemma\footnote{Note that Lemma \ref{rlsvi:lemQ_decomp} is derived for $\overline \theta_{tk}$, but we can derive an equivalent expression for $\underline \theta_{tk}$} \ref{rlsvi:lemQ_decomp} we see that this is
\begin{align}
    Q^{\pi_k}_{t}(s_{tk},a_{tk}) - \phi_{tk}^\top \underline \theta_{tk}  = - \E_{s'|s_{tk},a_{tk}}[\( \underline V_{t+1,\pk} - V^{\pi_k}_{t+1} \)(s')]  - \phi_{tk}^\top (\underline \eta_{tk} + \underline\xi_{tk} + \underline \lambda_{tk}^{\pi_k}) - \underline m_{tk}^{\pi_k}(s_{tk},a_{tk}).
\end{align}
And we can define
\begin{align}
    \dddot\zeta_{tk} \defeq \1\{G_k\}\(- \E_{s'|s_{tk},a_{tk}}[\( \underline V_{t+1,\pk} - V^{\pi_k}_{t+1} \)(s')]  + \( \underline V_{t+1,\pk} - V^{ \pi_k}_{t+1} \)(s_{t+1,k})\)
\end{align}
Then $(\dddot\zeta_{tk}, \overline{\mathcal H}_k \cup \mathcal{H}_{tk})_{t,k}$ is an MDS and due to the indicator function each term is bounded in absolute value by 2H. So, applying Azuma-Hoeffding we have with probability at least $ 1-\delta/16$ that $ \sum_{k=1}^K \sum_{t=1}^H \ddot \zeta_{tk} = \widetilde O (H\sqrt{T})$

So that, as in Lemma \ref{rlsvi:lemestimation}, induction gives us
\begin{align}
    \1\{{G}_k\}\(V^{\pi_k}_{1} - \underline V_{1k} \)(s_{1k})  &\leq \1\{{G}_k\}\sum_{t=1}^H \bigg[ \((\sqrt{\nu_k(\delta')} + \sqrt{\gamma_k(\delta')})\|\phi_{tk}\|_{\Sigma_{tk}^{-1}} + 4 H \epsilon  \)\( \Pi_{\tau=1}^t \1\{\mathcal S_{\tau k}\} \) \\
    &\qquad\qquad + H\( \Pi_{\tau=1}^{t-1} \1\{\mathcal S_{\tau k}\}\) \1\{\mathcal S^c_{tk}\}  \bigg] + \sum_{t=1}^H \dddot\zeta_{tk}
\end{align}
Summing over $ k$, this can be bounded as in Lemma \ref{rlsvi:lemestimation}. To conclude, summing the bound from (\ref{eqn:pessimism_final_ref}) over $ k $ and applying the same arguments as Lemma \ref{rlsvi:lemestimation} to both value function differences gives us that
\begin{align}
    \sumk \1\{{G}_k\}\(\Vstar_{1\pk} - \vo_{1\pk} \)(\sok) & \leq \frac{4}{\Phi(-1)}  \widetilde O\((\sqrt{\nu_K(\delta')} + \sqrt{\gamma_K(\delta')})H \sqrt{d}\sqrt{K}  + H^2K \epsilon + \frac{H^2 d}{\alpha_L^2} \) \\
    & + \tilde O(H \sqrt{K}) + 4HT\epsilon
\end{align}
so that consolidating terms gives us the desired bound. Notice that if $\epsilon \geq \frac{1}{10H}$ the result trivially holds.

To conclude we just need to take a union bound over the two applications of Azuma-Hoeffding and the intersection of the $ G_k $ we get the result with probability $ 1 - \delta/2$ as desired.
\end{proof}

\subsection{Bounding the Warmup}

\begin{lemma}[Warmup Bound]
\label{rlsvi:lemWarmup}
	\begin{align}
	 \sumk \sum^H_{t=1}H \1\Big\{ \mathcal{S}^c_{tk} \Big\} \defeq \sumk \sum^H_{t=1}H \1\Big\{ \| \phi_{tk} \|_{\Sigma^{-1}_{tk}} > \alpha_L \Big\} = \widetilde O\(\frac{H^2d}{\alpha_L^2}\).
	\end{align}

\end{lemma}
\begin{proof}
\begin{align}
\sumk \sum^H_{t=1}H \1\Big\{ \| \phi_{tk} \|_{\Sigma^{-1}_{tk}} > \alpha_L \Big\} & = H\sumk\sum^H_{t=1} \1\Bigg\{ \frac{\| \phi_{tk} \|_{\Sigma^{-1}_{tk}}}{\alpha_L} > 1 \Bigg\} \\
& = H\sumk\sum^H_{t=1} \1\Bigg\{ \frac{\| \phi_{tk} \|^2_{\Sigma^{-1}_{tk}}}{\alpha^2_L} > 1 \Bigg\} \\
& \leq H\sumk\sum^H_{t=1} \min \bigg\{1, \frac{\| \phi_{tk} \|^2_{\Sigma^{-1}_{tk}}}{\alpha^2_L} \bigg\}  \\
& \stackrel{(a)}{\leq} \frac{H}{\alpha_L^2}\sum^H_{t=1}\sumk  \min\{1, \| \phi_{tk} \|^2_{\Sigma^{-1}_{tk}} \}  \\
& \stackrel{(b)}{\leq} \frac{H^2}{\alpha_L^2} \widetilde O(d) = \widetilde O\(\frac{H^2d}{\alpha_L^2}\)
\end{align}
Where (a) holds since $ 1/\alpha^2 > 1$ by the following reasoning. Let $ x > 1$ and consider two cases: if $ y < 1/x $ then $ \min\{1, x y\} = xy = x \min\{1, y\}$ and if $ y \geq 1/x$ then $ \min\{1, x y\} = 1 \leq x \leq x \min\{1, y\} $.
Finally, (b) is due to Lemma \ref{rlsvi:lemfeature_sum_all}.
\end{proof}

\newpage
\section{Computational Complexity}\label{sec:complexity}

Now we take a look at the computational complexity of the algorithm.

\begin{proposition}[Computational Complexity of \textsc{opt-rlsvi} in finite action spaces]
\label{rlsvi:propComputationalComplexity}
Let $A$ be the number of actions available at every timestep. Then \textsc{opt-rlsvi} can be implemented in space $O(d^2H+dAHK)$ and time $O(d^2AHK^2)$.
\end{proposition}
\begin{proof}
In terms of computational complexity, a naive implementation of \textsc{opt-rlsvi} requires $O(d^2)$ elementary operations to compute $\| \phi_{t+1,i} \|_{\Sigma^{-1}_{t+1,k}}$ to assess which decision rule to use in definition \ref{rlsvi:defQdef}.
This must be done for all next-state action-value functions at the experienced successors states. If the action space is finite with cardinality $A$ then the maximization over action to compute the value function $\overline V_{t+1,k}(s_{t+1,i})$ at the next timestep for the $k$ experienced successor states $s_{t+1,1},\dots,s_{t+1,k}$ would take $O(d^2AK)$ total work per timestep. A further $O(d^3)$ is needed to compute the inverse of $\Sigma_{tk}$ to solve the least square system of equation, but this can be brought down to $O(d^2)$ using the usual Sherman-Morrison rank one update formula. All this must be done at every timestep of the least-square value iteration procedure, which must run every episode, giving a final runtime $O(d^2AHK^2)$.

As for the memory, one can store the $K$ features $\phi_t(s_{tk},a)$
 for all $A$ actions, timestep $H$ and episode $K$ using $O(dAHK)$ memory, in addition to the inverse of the $\Sigma_{tk}$ matrices ($O(d^2H)$ space) and the scalar rewards ($O(KH)$ space).
\end{proof}

\newpage

\section{Technical Lemmas}\label{sec:technical}

\begin{lemma}[Self-normalized process ] \citep{Abbasi11}\label{rlsvi:lemself_normalized}
Let $ \{x_i\}_{i=1}^\infty$ be a real valued stochastic process sequence over the filtration $ \{\mathcal{F}_i\}_{i=1}^\infty$. Let $ x_i $  be conditionally $ B$-subgaussian given $ \mathcal{F}_{i-1}$. Let $ \{\phi_i\}_{i=1}^\infty $ with $ \phi_i \in \mathcal{F}_{i-1}$ be a stochastic process in $ \R^d$ with each $ \|\phi_i\|\leq L_\phi$. Define $ \Sigma_i = \lambda I + \sum_{j=1}^{i-1}\phi_i \phi_i^\top$.
Then for any $ \delta>0$ and all $ i \geq 0$, with probability at least $ 1-\delta$
\begin{align}
    \bigg\| \sum_{i=1}^{k-1} \phi_{i} x_{i} \bigg\|_{\Sigma_{k}^{-1}}^2 \leq 2B^2 \log \bigg(\frac{\det (\Sigma_i)^{1/2} \det(\lambda I)^{-1/2}}{\delta} \bigg) \leq 2B^2\bigg(d\log\(\frac{ \lambda + k L_\phi^2}{\lambda}\) + \log(1/\delta)\bigg)
\end{align}
\end{lemma}

\begin{lemma}[Sum of features] \citep[Lemma 11]{Abbasi11}\label{rlsvi:lemfeature_sum_all}  Using the notation defined above,
\begin{align}
     \sum_{i=1}^k \min\{1, \|\phi_{i}\|^2_{\Sigma_{i}^{-1}}\} \leq 2d\log\(\frac{\lambda + kL_\phi^2}{\lambda}\)
\end{align}
\end{lemma}

\begin{lemma}[Sum of features in final norm]\label{rlsvi:lemfeature_sum_final} \citep[Lemma D.1]{jin2019provably}
\begin{align}
    \sum_{i=1}^{k-1} \|\phi_{i} \|^2_{\Sigma_{k}^{-1}} \leq d
\end{align}
\end{lemma}

\begin{lemma}[Gaussian concentration]\label{rlsvi:lemgaussian_concentration} \citep[Appendix A]{abeille2017linear}
Let $ \overline{\xi}_{tk} \sim \mathcal{N}(0, H\nu_k(\delta) \Sigma_{tk}^{-1}) $. For any $ \delta > 0$, with probability $ 1- \delta$
\begin{align}
    \|\overline\xi_{tk}\|_{\Sigma_{tk}} \leq c \sqrt{H d \nu_k(\delta) \log(d/\delta)} 
\end{align}
for some absolute constant $ c$.
\end{lemma}

\begin{lemma}[Covering numbers]\label{rlsvi:lemcovering_number}\citep[Section 4]{pollard1990empirical}
A euclidean ball of radius $ B$ in $ \R^d$ has $ \varepsilon$-covering number at most $ ( 3B/\varepsilon)^d$.
\end{lemma}

\begin{lemma}[Simplifying the log term]\label{rlsvi:lemlog_term} With $ \lambda \geq 1$, we can choose $ c_1 $ so that
    \begin{align}
       \sqrt{\beta_k(\delta)} & \geq  8Hd \bigg(\log\(\frac{k L_\phi^2 + \lambda}{\lambda}\) \\
       & + \log \bigg(3 (2H\sqrt{kd/\lambda} + \sqrt{\gamma_k(\delta)/\lambda} + 1/\lambda)/\(\frac{\alpha_U - \alpha_L}{8kL_\phi^2 H^2}\)^2\bigg) + \log(1 /\delta)\bigg)^{1/2}
    \end{align}
\end{lemma}
\begin{proof}
Recall that
\begin{align}
    \sqrt{\beta_k(\delta)} &\defeq c_1 Hd \sqrt{\log \(\frac{Hdk \max(1,L_\phi) \max(1,L_\psi) \max(1,L_r) \lambda}{\delta}\)}
\end{align}
Using $\lambda \geq $ and expanding the definitions of terms on the RHS of the statement we can bound it by
\begin{align}
    &\leq  8 Hd \bigg( \log \(\frac{(k L_\phi^2 + \lambda)3( 2H\sqrt{kd} + \sqrt{\gamma_k(\delta)} + 1)64k^2 L_\phi^4 H^4}{\delta (\alpha_U - \alpha_L)^2}\) \bigg)^{1/2}\\
    &\leq 8  Hd \bigg( \log \(\frac{(k L_\phi^2 + \lambda)(H\sqrt{kd} + \sqrt{\gamma_k(\delta)})64k^2 L_\phi^4 H^2(\gamma_k(\delta))}{\delta \lambda}\) \bigg)^{1/2}
    \end{align}
    \scriptsize
    \begin{align}
    \leq 8 Hd \bigg( \log \(\frac{(k L_\phi^2 + \lambda)(H\sqrt{kd})k^2 L_\phi^4 H^2(c_2^2 dH (\sqrt{\beta_k(\delta)} + \sqrt{\lambda} L_\phi ( 3HL_\psi + L_r ) + 4 \epsilon H \sqrt{dk})^2 \log(d/\delta))^{3/2}}{\delta}\) \bigg)^{1/2}
\end{align}
\normalsize
Bounding the $ \sqrt{\beta_k(\delta)}$ by $ c_1 Hd (Hdk\max(1,L_\phi) \max(1,L_\psi) \max(1,L_r) \lambda)/\delta$ this gives us a large polynomial in $ k,H, d, \lambda, \max(1,L_\phi), \max(1,L_\psi), \max(1,L_r), 1/\delta$. We bound this by $ c(kH d \lambda \max(1,L_\phi) \max(1,L_\psi) \max(1,L_r)/\delta)^{c'}$ for some $ c, c'$, and taking the log to move the exponent into the constant gives the existence of some $ c_1$ to define $ \beta_k(\delta)$.    
\end{proof}

\end{document}